\documentclass{article} 
\usepackage{iclr2027_conference,times}
\iclrfinalcopy

\usepackage{amsmath,amsfonts,bm}

\def\eqref#1{equation~\ref{#1}}

\def\1{\bm{1}}

\DeclareMathAlphabet{\mathsfit}{\encodingdefault}{\sfdefault}{m}{sl}
\SetMathAlphabet{\mathsfit}{bold}{\encodingdefault}{\sfdefault}{bx}{n}

\usepackage{hyperref}
\usepackage{url}
\usepackage{amsfonts}
\usepackage{amsmath}   
\usepackage{amsthm}    
\usepackage{amssymb}   
\usepackage{graphicx}
\usepackage{booktabs} 
\usepackage{hyperref}
\usepackage{booktabs}
\usepackage{ragged2e}
\usepackage{microtype}
\usepackage{array}
\usepackage{enumitem}
\usepackage{placeins}
\usepackage{float}
\usepackage{graphicx}
\usepackage{subcaption} 
\usepackage{multirow}
\usepackage{tabularx}
\usepackage{siunitx}
\usepackage{algorithm,algpseudocode}

\newtheorem{assumption}{Assumption}[section]
\newtheorem{proposition}{Proposition}[section]
\newtheorem{definition}{Definition}[section]

\title{Beyond Model Ranking: Regime Diagnosis for Distributional-Statistical Misspecification in Industrial Time-Series Forecasting}

\author{Pengyu Nie \\
JD.com, Inc. \\
\texttt{niepengyu.3@jd.com} \\
Chenglang XU \\
JD.com, Inc. \\
\texttt{xuchenglang1@jd.com} \\
Yaoshi Chen  \\
JD.com, Inc. \\
\texttt{chenyaoshi1@jd.com} \\
Chaogan Ren \\
JD.com, Inc. \\
\texttt{renchaoqan@jd.com} \\
Wei Hu \\
JD.com, Inc. \\
\texttt{huwei157@jd.com} \\
Chao Yang \\
JD.com, Inc. \\
\texttt{yangchao357@jd.com} \\
Jiangong Zhang \\
JD.com, Inc. \\
\texttt{zhangjiangong@jd.com} \\
}

\begin{document}

\maketitle

\begin{abstract}
Time-series forecasting models achieve strong benchmark performance but
exhibit severe systematic bias in industrial deployments. This train--deploy
gap is conventionally attributed to temporal-structural errors or
distribution shifts. We characterize a complementary source that these
explanations overlook: canonical losses embed fixed statistical priors, while
industrial demand mixes benign and pathological regimes---zero-inflation,
skewness, high variability---in which these priors are systematically
violated. The induced bias persists even under perfect temporal modeling,
remains in a distributional-shape component that normalization cannot
remove, and creates an aggregation trade-off invisible to aggregate metrics.
We turn these observations into an evaluation toolkit centered on the
\textbf{Regime-wise Relative Bias Vector (RBV)}: a metric-agnostic,
regime-decomposed diagnostic that audits how pooled training allocates
systematic mismatch across pathological subpopulations. A controlled
attribution analysis decomposes RBV into a model-independent intrinsic
floor, set by each loss's estimand, and an excess component attributable
to training, tracing observed bias to the loss rather than the model. A
large-scale study---$13$ loss objectives, $3$ seeds, $60{,}000$+ series
spanning RetailShiftBench and M5, with random-split controls---shows that
regime-aware diagnosis separates optimization-type from bias-type failure,
and that regime-aware training resolves the pooling-induced bias that
capacity scaling cannot, for mean-type losses. A formal structural observation, that risk under
evaluation-distribution contamination is affine in the pathology mixture
weight, grounds these findings. Our work complements model ranking with
mechanism-grounded, regime-oriented evaluation.
\end{abstract}

\vspace{-10pt}
\section{Introduction}
\vspace{-5pt}
Industrial demand forecasting underpins billion-dollar inventory and
supply-chain decisions across retail, energy, and manufacturing.
Despite strong performance on academic benchmarks, deep learning and
time-series foundation models exhibit severe systematic bias in real-world
deployments. The direction of this bias is dictated by the training loss:
mean-targeting objectives overshoot zero-demand periods, while median-type
objectives suppress realized bursts---so aggregate accuracy certifies
neither the presence nor direction of this mismatch on intermittent, skewed, and
high-variability sequences.

This prevalent train--deploy gap is conventionally attributed to
limited model capacity, imperfect temporal learning, or exogenous
distribution shifts. We surface a largely overlooked error source,
orthogonal to these explanations: canonical loss functions embed rigid
statistical priors---mean-seeking, symmetric, continuous-support
assumptions---that are systematically violated under industrial demand
pathologies (zero-inflation, skewness, scale variability). The two
canonical losses fail in opposite directions: mean-targeting MSE
over-predicts sparse regimes, whereas median-targeting MAE
under-predicts them (Tab.~\ref{tab:regime_perf}). Industrial datasets
are intra-dataset heterogeneous---benign, zero-inflated, skewed, and
high-CV segments coexist---so no single prior fits all data, and
globally-optimal losses (e.g., Tweedie chosen for zero-inflated sales)
trade performance across regimes rather than resolving the mismatch.
Because aggregate metrics average across regimes, reported benchmark
gains rest heavily on benign segments while deployment-relevant
mismatches stay hidden. Existing research offers no remedy from either
side. The OR community studies lumpy-demand pathologies (Croston's
family, SBA, Bayesian taxonomy models) yet treats them as modeling
obstacles, not evaluation dimensions for general-purpose forecasters.
Sliced benchmarks (TIME, TempusBench, TSFM-Bench) mitigate averaging
bias but examine only temporal-structural signals, so this
misspecification remains invisible to them; industrial evaluations
instead rely on static per-SKU grouping, which assigns each series a
single distributional profile and conceals coexisting pathologies.
Building on classical loss--estimand theory, we formalize the resulting
distributional-statistical misspecification as mechanistically
distinct from temporal-structural error: its bias persists under
idealized temporal modeling and stationary inputs. We claim
separability, not independence: loss-rooted bias remains non-vanishing
even when temporal-structural error is eliminated by construction---regime membership, not model quality alone, dictates the direction
of mismatch.

Based on this separation, we propose a mechanism-grounded diagnostic
paradigm that shifts evaluation from pure model ranking to
regime-oriented misspecification diagnosis.
Rather than chasing marginal accuracy improvements from novel network
architectures, this work bridges OR-driven industrial insights and modern
deep forecasting evaluation.
Our contributions are fourfold:
\begin{enumerate}[leftmargin=*, itemsep=1pt, topsep=2pt, parsep=0pt]
\setlength{\itemindent}{0pt}
\item \textbf{Pathology Taxonomy and Benchmark:} We construct a
quantifiable hierarchical regime taxonomy of orthogonal statistical
pathologies and their couplings, introduce \textbf{RetailShiftBench}
with full pathological-regime coverage, and audit mainstream benchmarks
against a pathology-coverage standard.
\item \textbf{Regime-Decomposed Bias Audit:} We propose a metric-agnostic
diagnostic protocol centered on the Regime-wise Relative Bias Vector
(RBV), which audits pooled training's allocation of systematic mismatch
across pathological subpopulations---imbalances invisible to aggregate
metrics. A controlled attribution analysis decomposes RBV into a
model-independent intrinsic floor, set by each loss's estimand, and an
excess component attributable to training, tracing observed bias to the
loss rather than the model.
\item \textbf{Supporting Structural Analysis:} Formal definitions
separate the two bias modes, a loss--estimand mapping links pathologies
to violated assumptions, and a formal observation shows that risk under
evaluation-distribution contamination is affine in the pathology mixture
weight; together with the accompanying boundary-condition analyses, this provides the mechanistic
basis for the empirical findings.
\item \textbf{Large-Scale Controlled Study:} We instantiate the protocol
over $13$ losses, $3$ seeds, and $60{,}000$+ series (RetailShiftBench;
external validation on M5), with random-split controls. Regime-aware
diagnosis consistently separates optimization-type from bias-type
failure, and regime-aware training resolves the pooling-induced bias it
exposes.
\end{enumerate}

\vspace{-10pt}
\section{Related Work}
\vspace{-5pt}
\subsection{Time-Series Diagnostic Framework Ecosystems}
\vspace{-5pt}
\begin{figure}[t]
\centering
\vspace{-5pt}
\includegraphics[width=\linewidth]{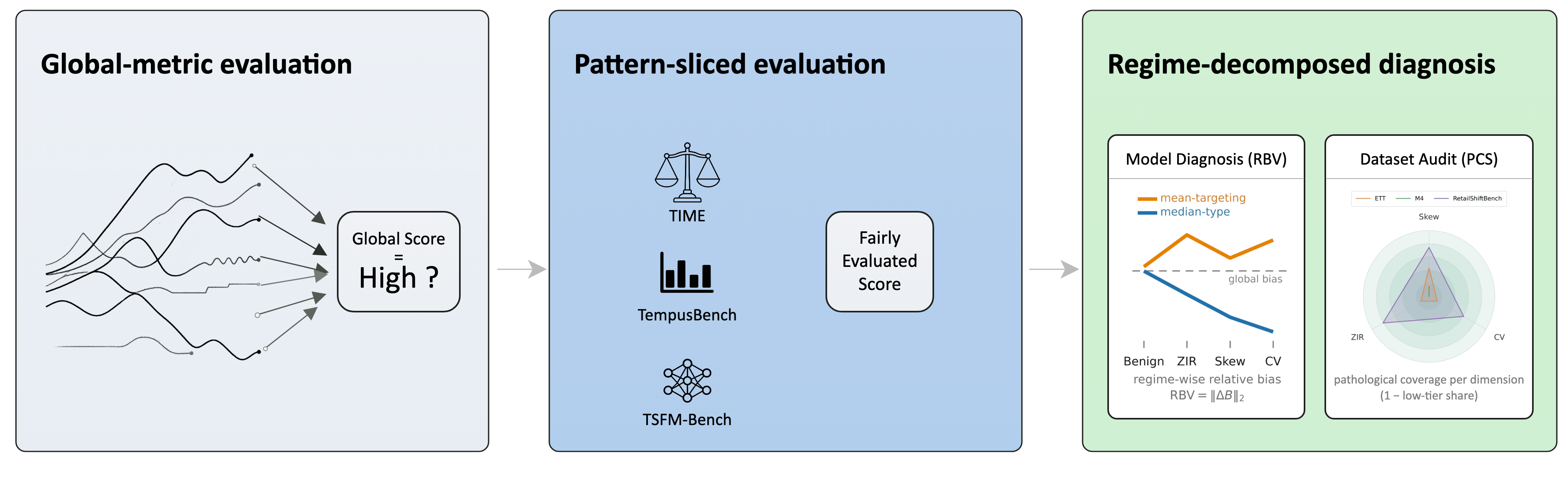}
\vspace{-10pt}
\caption{Evolution of time-series forecasting evaluation paradigms.}
\label{fig:paradigm_evolution}
\vspace{3pt}
\parbox{\linewidth}{\scriptsize
Evolution of time-series forecasting evaluation paradigms. Global aggregate metrics (left) hide regime-specific failure; pattern-sliced benchmarks (middle) retain uniform averaging over label-side pathologies; our RBV model diagnosis and PCS dataset audit (right) evaluate models and datasets at the level of distributional regimes.}
\vspace{-10pt}
\end{figure}

Conventional time-series benchmarks rely on idealized Gaussian stationary
assumptions and fail to capture authentic industrial distributional traits,
yielding deployment-irrelevant validation and hidden robustness risks. Recent
works move beyond global ranking toward fine-grained sliced evaluation:
TIME~\cite{qiao2026s} introduces pattern-aware slicing for temporal-structural
failures, while TempusBench~\cite{goktas2026tempusbench} and TSFM-Bench~\cite{li2025tsfm}
unify evaluation across diverse architectures and observe unstable scaling-law
behaviors. Yet their slicing axes remain temporal-structural,
leaving label-side pathologies unexamined.

Sliced and stratified evaluation are well-established in fairness and
robustness research to expose subgroup-level performance
gaps~\cite{d2022spotlight}; dataset-auditing practices further stress
systematic data quality assessment~\cite{zha2025,shahbazi2023}.
Nevertheless, existing efforts mostly target feature-side properties or
temporal patterns---e.g., feature-driven per-series model selection and
averaging~\cite{montero2020fforma} and aggregation-breadth analyses of
grouped forecasting~\cite{montero2021principles}---rather than label-side
distributional pathologies such as zero-inflation, skewness, and scale
variability that dominate mismatches. Our
regime-decomposed evaluation and \textbf{RetailShiftBench} fill this gap by
auditing label-conditional distributional defects at both the model and
dataset level.

Concurrent studies highlight intrinsic limitations of canonical forecasting
losses. Procrustean optimization bias~\cite{cai2025procrustean} identifies fitting
artifacts from point-wise loss decomposition, and DistDF~\cite{wang2026distdf}
emphasizes distribution alignment for reliable forecasting. These works
primarily refine training objectives to alleviate temporal fitting bias.
Complementing these lines, we target the stationary mismatch between
fixed-form loss priors and pathological industrial data, proposing
general-purpose diagnostic infrastructure for regime-oriented evaluation.
\vspace{-10pt}
\subsection{Dual Dimensions of Loss Misspecification}
\vspace{-5pt}
\label{sec:dual_misspecification}

Prior work largely addresses temporal-structural misspecification: point-wise
losses decompose joint sequential fitting into independent per-step
optimization, neglecting intrinsic temporal autocorrelation, and recent
slicing benchmarks inherit this focus. Temporal alignment methods can mitigate
such structural bias but cannot resolve inherent distributional mismatches in
industrial pathological data. We target the complementary, conceptually
separable \textit{distributional-statistical} misspecification: label-side
statistical pathologies under stationary distributions, whose induced bias
persists even when temporal patterns are perfectly recovered.

Classic intermittent-demand forecasting literature in operations research has
extensively investigated estimation bias for methods including Croston's
estimator~\cite{croston1972forecasting} and the Syntetos--Boylan
approximation~\cite{syntetos2001bias}, as well as recent taxonomy-conditioned
Bayesian alternatives such as TSB-HB~\cite{bai2025tsb}.
This body of work has long recognized skewed, zero-inflated demand
pathologies. Empirically, forecasting practitioners and competition
participants frequently observe improved performance when training separate
models per demand segment; such gains, however, are typically validated via
aggregate forecasting metrics and lack a mechanistic account of the
conditions under which partitioning improves or degrades predictive
robustness. Still, these domain treatments regard statistical pathologies
as modeling obstacles rather than evaluative dimensions, lacking
systematic benchmark auditing and regime-decomposed risk quantification for
deep forecasting. Unlike TSB-HB's model-oriented ADI-CV$^2$ taxonomy, ours is
built for evaluation and dataset auditing: ADI captures temporal
demand-occurrence patterns while our taxonomy uses a marginal zero-inflation
rate; their CV$^2$ measures variability of non-zero demand sizes while we use
a within-series CV over the full series including zeros; and neither ADI nor
CV$^2$ captures distributional skewness, the principal driver of
median-estimand mismatch. Existing intermittent-demand methods accordingly
focus on domain-specific point forecasts and do not generalize to canonical
deep-learning losses such as MSE, MAE, Tweedie, and quantile; we
systematically characterize how coupled industrial pathologies trigger
regime-dependent misalignment under these ubiquitous losses.
From a statistical decision-theoretic perspective, canonical losses act as
proper scoring rules~\cite{gneiting2007strictly} that elicit domain-agnostic
statistical functionals (mean, median, fixed
quantiles)~\cite{koenker1978regression}. While this guarantees statistical
consistency, the induced estimand is fixed by the loss prior and cannot adapt
to regime-varying distributions. 

A complementary line of work attributes the accuracy--value disconnect
to downstream optimization: smart predict-then-optimize and
decision-focused learning show that losses targeting statistical
estimands need not minimize decision regret, and advocate training
directly against the downstream
objective~\citep{elmachtoub2022smart,mandi2024decision}. In intermittent-
demand settings, the inventory literature similarly observes that
accuracy metrics can rank models opposite to newsvendor cost or
service level~\citep{ban2019big,syntetos2006stock}. These
explanations, however, concern the mismatch between a loss and a
single decision target; they largely overlook a distinct source of
bias: pooled shared-parameter training across heterogeneous regimes
induces aggregation trade-offs that persist even when the loss matches
the decision target.

Likelihood--distribution matching is established practice in probabilistic forecasting \citep{alexandrov2020gluonts,salinas2020deepar,smyth2002fitting}, yet remains a per-model manual choice without regime-level diagnosis. We instead contribute a hierarchical coupled-pathology taxonomy, the dual-bias conceptual separation, and an industrial benchmark (RetailShiftBench), converting scattered domain observations into a standardized diagnosis workflow.
\vspace{-10pt}
\section{Industrial Pathology Taxonomy}
\vspace{-10pt}
\label{sec:taxonomy}
Inventory-side zero-inflation arises from cold-start and stockout
scenarios generating sparse zero observations, conflicting with
median-centric loss optimization.
Marketing-induced skewness originates from promotional and seasonal
demand bursts creating asymmetric long-tailed distributions.
Scale heterogeneity describes fluctuating sales magnitudes yielding
extreme within-series variability, an inherent data trait irresolvable
by temporal modeling.
These three orthogonal statistical traits constitute our foundational
distributional pathologies.
Beyond these distribution-level characteristics, business-driven
temporal-structure effects such as holidays and product-lifecycle shifts
can break stationarity assumptions for canonical losses; these are
treated as amplification factors rather than core statistical
pathologies.
Critically, coupled high-order distributional regimes give rise to
emergent misalignment patterns that cannot be inferred from individual
one-dimensional properties; these constitute some of the most harmful
failure modes for real-world deployment.
See \S~\ref{app:coupled_ablation} for controlled ablation exploring
the boundary conditions under which coupled pathologies yield saturation
versus super-additive degradation.
\vspace{-5pt}
\subsection{Three Orthogonal Foundational Statistical Dimensions}
\vspace{-5pt}
\label{sec:taxonomy_foundational}
The three foundational dimensions directly violate the core statistical 
priors of canonical losses. They are conceptually orthogonal, forming 
a minimal sufficient set; while kurtosis or multimodality may introduce 
additional risks, they overlap with our dimensions or reflect 
temporal-structure confounds, and are treated as secondary factors. Loss functions fail on distributions, not on business labels: the same
statistical pathologies recur across domains under different business
guises, which is why the taxonomy is defined statistically and transfers
across domains.
\vspace{-5pt}
\subsubsection{Distribution Shape Skewness}
\vspace{-5pt}
Skewness quantifies distributional asymmetry, violating symmetric-residual
assumptions:
$\gamma_1 = \frac{1}{T}\sum_{t=1}^T \left(\frac{y_t - \bar{y}}{\sigma}\right)^3$.
Regimes: low ($|\gamma_1| < 1$), moderate ($1 \le |\gamma_1| < 4$), high ($|\gamma_1| \ge 4$).
\vspace{-5pt}
\subsubsection{Within-series Scale Variability}
\vspace{-5pt}
Measured by coefficient of variation (CV), this dimension characterizes
internal sequence scale imbalance, breaking uniform-weighting assumptions:
$\text{CV} = \frac{\sigma}{\bar{y}}$ $(\bar{y} > 0)$.
Regimes: low ($\text{CV} < 1.5$), moderate ($1.5 \le \text{CV} < 3.0$),
high ($\text{CV} \ge 3.0$). We strictly distinguish within-sequence CV
from cross-SKU absolute magnitude differences.
\vspace{-5pt}
\subsubsection{Temporal Sparsity (Zero-Inflation Rate)}
\vspace{-5pt}
ZIR quantifies intermittent zero observations from inventory shortage and
cold-start scenarios, violating continuous-support assumptions:
$\text{ZIR} = \frac{1}{T}\sum_{t=1}^T \mathbb{I}(y_t = 0)$.
Regimes: low ($\text{ZIR} < 0.2$), moderate ($0.2 \le \text{ZIR} < 0.4$),
high ($\text{ZIR} \ge 0.4$). 
Where possible, cutoffs follow established conventions: $|\gamma_1|=1$
marks the classical moderate-skew boundary~\cite{bulmer2012principles}, and the
ZIR upper cutoff aligns with the intermittent--lumpy demand boundary of
the OR classification literature~\cite{syntetos2005accuracy}. The remaining
cutoffs ($|\gamma_1|=4$, CV$=1.5/3.0$, ZIR$=0.2$) are retail-calibrated
design choices, fixed before any experimental evaluation and not tuned
to model outcomes; the partitioning paradigm generalizes to other
industrial domains with minor threshold recalibration, and the
App.~\ref{app:threshold} verifies that all mismatch conclusions are
robust to $\pm 20\%$ threshold perturbations.
\vspace{-10pt}
\subsection{Coupled Distributional Pathological Regimes}
\vspace{-5pt}
Coupled regimes arise from high-threshold intersections of the three
foundational dimensions: ZIR--Skew, ZIR--Vol, Skew--Vol, and ZIR--Skew--Vol, each requiring joint
satisfaction of the high-threshold conditions defined in
\S~\ref{sec:taxonomy_foundational}.

\textbf{Coupling with Temporal-Structure Effects.}
These structural effects do not create novel distributional-statistical misalignment on their own, yet amplify pre-existing systematic risk when superimposed atop distributional pathologies:
(i) non-stationary regime shifts increase inference complexity given already pathological data distributions;
(ii) superimposed cyclic patterns (e.g., weekly promotions overlaid on annual seasonality) create intermittent-burst transitions that fixed-form models fail to capture;
We exclude these couplings from our core taxonomy to preserve Prop.~\ref{prop:decomposition}'s theoretical rigor, while acknowledging them as critical amplifiers of deployment mismatch.
\textbf{Retail Manifestations.}
The proposed distributional regimes map onto well-known business patterns: high-ZIR series correspond to intermittent-demand and cold-start SKUs; high-skewness captures promotion-driven demand bursts; Skew--Vol characterizes long-tail niche items; the three-way composite regime represents sporadic luxury-type demand with volatile promotional lifts.
\textbf{Loss Misspecification Mapping}
\label{sec:loss_misspecification_mapping}
We systematically uncover and formalize the structural misalignment
between mainstream regression losses and industrial time-series
pathologies.
As validated by closed-form oracle analysis on synthetic marginals
(Fig.~\ref{fig:theorem_synth}, \ref{fig:dose_response_ablation}),
each individual pathology alone is sufficient to induce severe systematic misalignment
under perfect temporal fitting. Tab.~\ref{tab:regime_perf} confirms that the same misalignment
signatures manifest in real trained models alongside temporal
fitting errors.

Our core critique focuses on fixed-prior standard losses, whose static statistical assumptions cannot adapt to regime-varying industrial distributions. 
While overlapping multi-pathology superposition yields performance
saturation under perfect optimization, the \textbf{disjoint-support
coupling} between zero-inflation and conditional variability
(ZIR$\times$CV) can further induce super-additive emergent mismatch
at intermediate intensity (\S~\ref{app:coupled_ablation}).

\vspace{-10pt}
\section{Regime-Decomposed Bias Diagnostic}
\label{sec:regime_decomposed_evaluation}
\vspace{-10pt}
Conventional global evaluation metrics enforce uniform averaging across
heterogeneous industrial samples, eliminating regime-specific distortion
signals and concealing deployment-critical systematic risks. To resolve
this train--deploy evaluation gap, we propose a \textbf{regime-decomposed
diagnostic evaluation paradigm} grounded in our pathology taxonomy: the
taxonomy quantifies \emph{where} models fail (\S\ref{sec:taxonomy}), and
this section makes the failure \emph{measurable}.
Industrial forecasting suffers a \textbf{dual compression effect}: intrinsic
single-sequence distributional pathologies (high CV/ZIR/skewness) mainly
exist in low-volume long-tail SKUs, whose small absolute magnitude further
drowns their gradient contributions in global multi-SKU training, amplifying
inherent fitting bias.

\vspace{-5pt}
\subsection{Metric-Agnostic Regime-Decomposed Protocol}
\vspace{-5pt}
\label{sec:metric_agnostic_protocol}
Global averaging masks systematic regime-specific bias induced by fixed
global loss priors. To expose this structural misalignment, we formalize a
three-step protocol centered on \textbf{RBV} (mathematical definition in
\S~\ref{sec:rbv_def}): \textit{Metric-agnostic} denotes that Step~3 admits
any base functional; it does \emph{not} assert invariance of RBV magnitudes
across functionals. Quantitative comparisons are within-functional;
cross-functional agreement is reported at the ranking level
(\S~\ref{app:metric_agnostic}).
\textbf{1. Pathological Partition:} We assign test sequences into mutually
exclusive cells via per-series skewness, ZIR, and CV, binarized at the
low-tier boundary of \S~\ref{sec:taxonomy}: a sequence is pathological on
dimension $d$ iff $\mathrm{regime}_d \neq \mathrm{low}$. The not-low
pattern induces Benign (all low), single-pathology (ZIR, Skew, CV), and
coupled cells (ZIR--Skew, ZIR--CV, Skew--CV, ZIR--Skew--CV).
\textbf{2. Regime-Isolated Bias Estimation:} We compute regime-level
systematic error independently, eliminating cross-regime averaging.
\textbf{3. Metric-Agnostic RBV Quantification:} We apply any standard
forecasting metric to compute per-regime bias, then derive RBV to quantify
how globally optimized losses underweight pathological regimes. This
exposes the hidden imbalance of the \textbf{aggregation trade-off} inherent
to global training. The same partition supports a dataset-level
pathological-coverage audit (PCS, $0$--$1$; \S~\ref{app:pcs}).

\vspace{-5pt}
\subsection{The RBV Diagnostic}
\vspace{-5pt}
\label{sec:rbv_def}
Conventional point-wise prediction metrics conceal heterogeneous cross-regime
fitting distortion. We introduce the \textbf{Regime-wise Relative Bias Vector
(RBV)} to quantify distribution-level fitting misalignment: rather than
reporting only global or single-regime errors, RBV characterizes the
\textit{cross-regime structural alignment} between predictions and true
distributions by aggregating deviations across four industrial statistical
regimes---Benign, ZIR, Skew, and CV---and reveals the aggregation trade-off
masked by common evaluation protocols. The three pathology flags are
evaluated independently per cell, so a cell may carry multiple flags
simultaneously; cells with no flag form the Benign remainder. Regimes pool
the cells carrying their flag and thus overlap; cells do not.
We construct a four-dimensional relative bias vector
$\Delta B=(\Delta B_{\text{Benign}},\,\Delta B_{\text{ZIR}},\,
\Delta B_{\text{Skew}},\,\Delta B_{\text{CV}})$ with
$\Delta B_r = B_r - B_\text{global}$, and define the Regime-wise Relative
Bias Vector score as its $\ell_2$ norm,
$\text{RBV} = \|\Delta B\|_2$. RBV is a template rather than a single
metric: centering any regime-level functional $\phi_r$ at its pooled
analogue yields a corresponding instantiation---the relative-bias form
above is the primary one, with share-normalized and metric-agnostic
variants following the same construction (\S~\ref{app:rbv_stats}).

\textbf{Interpreting RBV.}
RBV answers a question accuracy metrics structurally cannot ask:
\emph{where will this model break?} It measures the cross-regime
\emph{imbalance} of systematic bias, not predictive accuracy: uniformly
poor but constant bias attains low RBV by construction, so RBV must be
read jointly with bias levels and accuracy---enforced structurally, as
Tab.~\ref{tab:regime_perf} reports RBV alongside WMAPE and
Tab.~\ref{tab:decompose_effect} tracks both under interventions. The
practical reading is quadrant-based: high global accuracy with high RBV is
the dangerous quadrant---apparently deployable, yet reliably broken in
pathological regimes, where deployment failures actually occur. The Rule
heuristic exemplifies the opposite extreme: its near-flat RBV with the
worst WMAPE (Tab.~\ref{tab:regime_perf}) is the null signature of a
non-pooled estimator---immunity-by-insensitivity, not good calibration.
With the demand-mass centering weights, the identity
$\sum_r \alpha_r \Delta B_r = 0$ holds exactly for any regime overlap;
the \emph{unweighted} sum $\sum_r \Delta B_r$ is generally non-zero, and
cross-weight readings are covered in \S\ref{app:rbv_stats}. The
bidirectional pattern is one estimand gap read from its two sides: RBV
carries the allocation and the sign pattern of $B_r$ the direction, with
uncertainty of both quantified in App.~\ref{app:rbv_stats}. For
non-mean estimands (MAE, Huber, quantiles), we read the \emph{excess RBV},
$\mathrm{RBV}-\mathrm{RBV}^{\ast}$, where the floor $\mathrm{RBV}^{\ast}$
is set by the loss's estimand alone (train-window label statistics); the
training-attributable signal is the deviation above the floor, which
restores cross-loss comparability (\S\ref{app:rbv_floor}).

\vspace{-5pt}
\section{Mechanistic Account}
\label{sec:mechanistic}
\vspace{-5pt}
This section provides formal definitions and a supporting structural
observation; the paper's primary contributions are the diagnostic protocol
(\S\ref{sec:regime_decomposed_evaluation}) and the empirical study
(\S\ref{sec:experiments}). Table~\ref{tab:loss_misspecification}
instantiates the mapping between loss priors, industrial departures, and
mismatch mechanisms, formalizing distributional-statistical bias as an
industrial loss--data mismatch.

\vspace{-5pt}
\subsection{Dual Bias Modes}
\vspace{-5pt}
\label{sec:dual_bias_modes}
\begin{definition}[Dual bias modes]\label{def:dualbias}
\emph{Temporal-structural bias} arises from violated i.i.d. assumptions of
point-wise losses: joint sequential fitting decomposes into independent
per-step optimization,
$\mathcal{L}_{\mathrm{point}}(\hat{y}_{1:T},y_{1:T})=\sum_{t=1}^{T}\ell(\hat{y}_t,y_t)\neq\ell_{\mathrm{joint}}(\hat{y}_{1:T},y_{1:T})$,
a mismatch eliminable by joint temporal modeling. \emph{Distributional-statistical
bias} arises, for temporally decorrelated sequences, in two layers.
\emph{(i) Prior--objective mismatch (estimand mismatch):} canonical losses
imply distinct estimands---the mean for MSE, the median for MAE, fixed
quantiles for pinball---so their statistically optimal forecasts
$\hat{y}_{\mathrm{stat}}=\arg\min_{\hat{y}}\mathbb{E}_{y\sim P_{\mathrm{ind}}}[\mathcal{L}(y,\hat{y})]$
are prior-dependent rather than data-determined; a fixed prior can therefore
misalign with the deployment objective independently of pathology. This
layer is classical and we claim it only as motivation.
\emph{(ii) Prior--marginal conflict and aggregation bias:} on pathological
marginals, these prior-dependent optima diverge materially, and, under
global pooling, shift toward a cross-regime compromise optimal for no
single regime. This component is independent of temporal modeling and
cannot be eliminated by capacity or architectural improvements under a
fixed loss prior.
\end{definition}

\begin{table}[htbp]
\scriptsize
\centering
\vspace{-5pt}
\caption{Statistical Assumptions and Deployment Mismatch Patterns}
\label{tab:loss_misspecification}
\setlength{\tabcolsep}{4pt}
\begin{tabularx}{\columnwidth}{@{}p{0.8cm} p{3.8cm} p{3.2cm} >{\raggedright\arraybackslash}X@{}}
\toprule
\textbf{Loss} & \textbf{Implicit Distributional Priors} & \textbf{Industrial Departure} & \textbf{Mismatch Mechanism} \\
\midrule
MSE
& Symmetric light-tailed residuals, stable finite variance; mean as optimal fitting target
& Heavy-tailed skewed demand with unstable variance
& Quadratic penalty amplifies burst residuals and averages out tail signals; the mean estimand systematically overshoots sparse regimes. \\
\addlinespace
MAE
& Symmetric continuous distribution; median as optimal estimation target
& Skewed, burst-heavy demand with frequent zero-inflation
& Zeros and a long upper tail pull the median below the mean; the burst
mass carrying most expected demand is discarded, widening the gap
 \\
\addlinespace
Quantile
& Stationary distribution; fixed quantile targets valid across
  sub-populations; non-degenerate tails
& Zero-inflated baseline with skewed long-tail demand
& Zero-inflation collapses lower-tail quantiles; the estimand
  degenerates into a step function under sparse bursts \\
\addlinespace
Poisson
& Variance $=$ mean; discrete counts; mean estimand under
  exponential curvature
& Over-dispersed burst clustering with variance exceeding the mean
& Burst clustering breaks equi-dispersion and mis-sets the fixed
  exponential curvature; the mass redistribution is invisible to
  linear-residual metrics \\
\addlinespace
Huber
& Stationary residual scale; outliers are genuine rare anomalies
& Heavy-tailed baseline with frequent extreme demand bursts
& A fixed threshold clips legitimate bursts as outliers; the
  effective estimand drifts with per-cell residual scale \\
\addlinespace
Tweedie
& Fixed dispersion and Tweedie-power parameters shared across all samples
& Heterogeneous SKU-level dispersion under varying industrial statistical regimes
& Static dispersion parameters fail to fit heterogeneous regime-wise distributions, leaving the estimand mismatched to varying statistical regimes. \\
\bottomrule
\end{tabularx}
\vspace{-5pt}
\end{table}
\vspace{-5pt}

\vspace{-5pt}
\subsection{Error Separation and Intensity Trend}
\vspace{-5pt}
Prop.~\ref{prop:decomposition} isolates the second mode where the
first is eliminated by construction: under perfect temporal fitting
($e_t=0$), the remaining deviation $b_L$ is mechanistically independent of
temporal modeling, providing a measurement-theoretic license for separating
the two error channels that aggregate metrics conflate---definitional, as
is the classical bias--variance decomposition, and not a claim about joint
optimization (formal statement and discussion in
\S~\ref{app:proof_separation}). Prop.~\ref{prop:monotonic} complements this quantitatively: for a fixed predictor, the
risk under evaluation-distribution contamination is affine in the
pathology mixture weight $\lambda$, non-decreasing whenever the
pathological component raises the risk above its benign-data level.

\paragraph{Two layers: mismatch vs aggregation trade-off.}
The first-order condition of a loss defines its target estimand (mean
for MSE, median for MAE, \(q_\tau\) for pinball)~\cite{gneiting2007strictly,koenker1978regression}, governing its
alignment with operational cost: a median-consistent learner on skewed
marginals is correct by construction, and what fails is the matching
between the loss-implied estimand and the deployment objective, not
the model---a failure mode we term \textbf{estimand mismatch}. This
estimand-mismatch layer is classical and we claim it only as
motivation; the two layers are easily confounded: the aggregation
trade-off is only exposed once the estimand is held fixed, so analyses
of loss--decision mismatch~\citep{elmachtoub2022smart,mandi2024decision}
do not isolate it. Our contribution is the second layer, the
\textbf{aggregation trade-off}: under pooled training across
heterogeneous regimes, shared parameters induce a cross-regime
compromise---even a well-calibrated loss cannot fit zero-inflated,
skewed, and high-variability series with one shared solution.
Decomposition mitigates this trade-off
(Tab.~\ref{tab:tab_loss_trajectory_grid}); capacity scaling
within the gradient-boosted model class cannot
remove it (Fig.~\ref{fig:capacity_mse}). The two mechanisms separate in intervention space: decomposition
dissolves the aggregation trade-off for mean-type losses
(Fig.~\ref{fig:capacity_mse}) while exposing, not repairing, the
estimand gap of median-type losses (Fig.~\ref{fig:capacity_mae}).

\vspace{-10pt}
\section{Experimental Evaluation}
\label{sec:experiments}
\vspace{-10pt}
The mechanistic account of \S\ref{sec:mechanistic} predicts two separable
failure layers---estimand mismatch and the pooling-induced aggregation
trade-off. This section tests both on RetailShiftBench: we first audit
systematic bias and cross-regime misalignment under global pooled training
(Tab.~\ref{tab:regime_perf}), then contrast global against regime-aware
training to isolate the aggregation trade-off
(Tab.~\ref{tab:decompose_effect}), and finally probe its robustness along
capacity, quantile, and architecture axes
(Figs.~\ref{fig:capacity_mse},~\ref{fig:capacity_mae}).

\vspace{-5pt}
\subsection{Experimental Setup}
\vspace{-5pt}
\label{sec:experimental_setup}
\textbf{Dataset.}
We use desensitized real-world retail daily SKU time-series.
RetailShiftBench contains 31{,}213 sampled time-series of 89
time steps each, covering diverse pathological conditions including
zero-inflation rate (ZIR), skewness (Skew), and within-series coefficient-of-variation (CV).
Notably, these short series exhibit a stable weekly periodicity
without trend breaks or long-range drift (Fig.~\ref{fig:fig_rsb_trend}),
a deliberate design choice that removes temporal non-stationarity as
a confound so that residual errors are attributable to distributional
misspecification rather than distribution shift over time.
\textbf{Rule Heuristic Baseline.}
We adopt a domain-driven heuristic forecasting rule that combines five-week
category-level trends and the most recent two-week sales observations to
produce point forecasts.
\textbf{Models \& Losses.}
For our main model-loss ablation in Tab.~\ref{tab:regime_perf}, we evaluate
two representative forecasting backbones, XGB and TFT~\cite{chen2016,lim2021},
paired with six plug-in loss objectives: MSE, MAE, QL60, Huber, Tweedie, and
Poisson. This setup enables cross-architecture comparison and helps rule out
model-specific artifacts.
To isolate loss-driven misspecification from feature-availability confounds,
the output of the Rule heuristic is injected as an auxiliary feature into
every model-loss configuration, guaranteeing all models have access to
explicit domain-aware statistical signals.
Our metric-agnostic regime-decomposed diagnostic protocol
(\S\ref{sec:metric_agnostic_protocol}) is applied across benign stationary
and three high-risk pathological regimes.
Complementing these per-model-loss results, Tab.~\ref{tab:decompose_effect}
investigates the aggregation-bias trade-off by contrasting global pooled
training against regime-aware training split by the joint skew*cv*zir
pathological taxonomy. Backbones share a common, modest tuning budget; we report no
cross-backbone ranking, and all conclusions are within-backbone
contrasts over losses and interventions.

\vspace{-5pt}
\subsection{Regime-Dependent Loss Misalignment}
\vspace{-5pt}
\label{sec:loss_misalignment}
Applying the RBV diagnostic (\S\ref{sec:rbv_def}) to the main model--loss
ablation:
a lower RBV indicates stable and consistent cross-regime fitting performance,
whereas a higher RBV signifies severe heterogeneous systematic misalignment
across statistical regimes. MSE suffers
under ZIR due to its Gaussian assumption and sensitivity to zero-inflated
noise. MAE performs poorly on Skew data given its median-centered objective.
Static-weight QL, Huber and Tweedie degrade under high-variability CV
conditions. All losses achieve good performance on Benign data, and all
models receive identical pathology-aware prior information; persistent
cross-regime fitting imbalance therefore cannot stem from model capacity,
architecture, or missing statistical cues, and must originate from inherent
loss-function statistical misspecification. We fix backbones and vary only
plug-in loss objectives for loss-centric ablation. Specialized OR
intermittent-demand methods do not admit plug-in loss substitution and are
therefore excluded from this ablation; their estimand behavior under these
pathologies is analyzed conceptually in App.~\ref{app:or_m5}.
We further audit benchmark industrial validity via the Pathological
Coverage Score (\S~\ref{app:pcs}), where \textbf{RetailShiftBench}
provides full-spectrum industrial pathological coverage for reliable
regime-oriented evaluation. Beyond retail demand, the same misspecification
anatomy appears across industrial domains (App.~\ref{app:cross_domain}).

\begin{table}[t!]
\centering
\tiny
\vspace{-5pt}
\caption{Systematic bias and cross-regime vector misalignment (RBV) under hierarchical model-loss.}
\label{tab:regime_perf}
\vspace{-5pt}
\setlength{\tabcolsep}{3pt}
\setlength{\belowcaptionskip}{2pt}
\renewcommand{\arraystretch}{1.1}
\resizebox{\linewidth}{!}{
\begin{tabular}{l|l|cccccc|cccccc|c}
\hline
\multirow{2}{*}{Semantics}
& \multirow{2}{*}{Regime}
& \multicolumn{6}{c|}{XGB}
& \multicolumn{6}{c|}{TFT}
& \multirow{2}{*}{Rule} \\
\cline{3-14}
&
& MSE & MAE & QL$_{0.60}$ & Huber & Tweedie & Poisson
& MSE & MAE & QL$_{0.60}$ & Huber & Tweedie & Poisson
& \\
\hline
\multirow{5}{*}{$\text{Bias}$}
& \texttt{All}
& 0.035 & -0.088 & 0.045 & -0.035 & 0.025 & -0.01
& -0.004 & -0.076 & 0.05 & -0.03 & -0.039 & -0.086
& 0.033 \\
& Benign
& 0.022 & -0.03 & 0.073 & -0.02 & 0.017 & -0.043
& 0.015 & 0.016 & 0.095 & 0.022 & 0.038 & -0.001
& 0.031 \\
& Skew
& 0.061 & -0.185 & -0.003 & -0.051 & 0.044 & 0.048
& -0.039 & -0.232 & -0.043 & -0.116 & -0.17 & -0.223
& 0.045 \\
& CV
& 0.262 & -0.685 & -0.421 & -0.042 & 0.176 & 0.341
& -0.33 & -0.894 & -0.726 & -0.437 & -0.838 & -0.868
& 0.057 \\
& ZIR
& 0.116 & -0.385 & -0.114 & -0.110 & 0.085 & 0.156
& -0.082 & -0.53 & -0.182 & -0.277 & -0.443 & -0.532
& 0.038 \\
\hline
$\text{RBV}$
& \texttt{All}
& 0.243 & 0.676 & 0.496 & 0.078 & 0.163 & 0.394
& 0.338 & 0.953 & 0.816 & 0.486 & 0.908 & 0.915
& 0.027 \\
\hline
$\text{WMAPE}$
& \texttt{All}
& 0.4679 & 0.4270 & 0.4525 & 0.4670 & 0.4593 & 0.4792
& 0.4934 & 0.4776 & 0.485 & 0.4798 & 0.4757 & 0.4719
& 0.5182 \\
\hline
\end{tabular}
}
\vspace{3pt}
\parbox{\linewidth}{\scriptsize Notes: The \texttt{All} row reports global dataset-wide bias
$B_{\mathrm{global}}$ for RBV computation.
RBV is the vector norm computed from Benign/Skew/CV/ZIR regime-specific
bias (excluding All).}
\vspace{-10pt}
\end{table}

\vspace{-10pt}
\begin{table}[t!]
\centering
\tiny
\caption{Effect of pathology decomposition.}
\label{tab:decompose_effect}
\vspace{-5pt}
\setlength{\tabcolsep}{4pt}
\setlength{\belowcaptionskip}{2pt}
\renewcommand{\arraystretch}{1.1}
\resizebox{\linewidth}{!}{
\begin{tabular}{lcccc|cccc|c}
\hline
& \multicolumn{4}{c|}{XGB} & \multicolumn{4}{c|}{TFT} & \multirow{2}{*}{Rule} \\
\cline{2-9}
Metric & MSE & MAE & Huber & Tweedie & MSE & MAE & Huber & Tweedie & \\
\hline
RBV
& $0.243{\to}0.041$ ($-83.1\%$)
& $0.675{\to}0.899$ ($+33.2\%$)
& $0.077{\to}0.262$ ($+240.3\%$)
& $0.163{\to}0.014$ ($-91.4\%$)
& $0.335{\to}0.14$ ($-58.2\%$)
& $0.914{\to}0.863$ ($-5.6\%$)
& $0.459{\to}0.572$ ($+24.6\%$)
& $0.873{\to}0.881$ ($+0.9\%$)
& $0.033$ \\
WMAPE
& $0.4679{\to}0.462$ ($-1.3\%$)
& $0.4270{\to}0.4255$ ($-0.4\%$)
& $0.4670{\to}0.4765$ ($+2.0\%$)
& $0.4593{\to}0.4539$ ($-1.2\%$)
& $0.4675{\to}0.4730$ ($+1.2\%$)
& $0.4475{\to}0.4487$ ($+0.3\%$)
& $0.4603{\to}0.4595$ ($-0.2\%$)
& $0.4514{\to}0.4549$ ($+0.8\%$)
& $0.5182$ \\
\(\text{MSE}\)
& $11.909{\to}11.068$ ($-7.1\%$)
& $12.057{\to}11.524$ ($-4.4\%$)
& $13.575{\to}20.176$ ($+48.6\%$)
& $11.992{\to}11.465$ ($-4.4\%$)
& $15.68{\to}14.55$ ($-7.2\%$)
& $15.182{\to}14.40$ ($-5.2\%$)
& $14.140{\to}14.84$ ($+5.0\%$)
& $14.714{\to}14.274$ ($-3.0\%$)
& $16.452$ \\
\hline
\end{tabular}
}
\vspace{3pt}
\parbox{\linewidth}{\scriptsize
Notes: Each cell shows ``global${\to}$regime skew*cv*zir split'';
Global: single model trained on full dataset. Split: separate models trained per pathological regime.
Rule denotes a non-trained heuristic as fixed reference, invariant to decomposition.
(\%): relative change of Split vs.\ Global; sign reports direction (negative: improvement).
MSE is computed on raw demand values (no normalization),
which preserves loss-induced overshoot in native units; RBV is our primary
cross-regime instrument.
}
\vspace{-15pt}
\end{table}

\section{Discussion}
\vspace{-10pt}
\label{sec:discussion}

\textbf{Theoretical and Paradigm Insights.}
Traditional benchmarks equate pattern fitting with robustness, overlooking
label-side pathologies; structure-aware benchmarks improve temporal
evaluation yet ignore dataset-level defects. Individual loss
sensitivities---e.g., MSE to zero-inflation, MAE to skewness---reflect
classical principles, yet remain fragmented across disconnected
literatures; we systematize them into a unified taxonomy and
regime-oriented diagnostic framework. Deep forecasting models
are trained and evaluated under uniform global criteria, and these
criteria are not prior-free: global accuracy metrics are estimands of
the same kind as the losses they rank---L1 ranks certify
median-adequacy, L2 ranks certify mean-adequacy, under an implicit
equal-weight aggregation prior---so conventional evaluation repeats the
training assumption instead of auditing it. This
creates a self-reinforcing dynamic: series well-aligned with
loss priors dominate gradient updates, while pathological
subgroups receive weak gradient signals and remain undetected
by aggregate metrics. Since standard accuracy metrics are (low-order)
functions of residuals, pooled ERM produces a compromise solution.
Our work supplies the unifying statistical mechanism behind long-standing
mixed evidence on when pooling helps or hurts
\citep{hubrich2005}.
Consequently, globally optimized forecasters---systematically
favored by global leaderboards---trade alignment on pathological regimes
for better overall aggregate accuracy, while per-series OR
estimators (Croston, SBA) remove pooling distortion yet inherit
classical estimand mismatch; Croston's documented over-estimation bias
\cite{syntetos2001bias} is precisely the residual our account predicts,
and TSB-HB occupies the partial-pooling regime of Fig.~\ref{fig:agg_tradeoff}.

\textbf{Structural barriers to scale-up.}
Capacity scaling captures richer temporal patterns, yet statistical
heterogeneity from coexisting ZIR, skewness, and CV cannot be eliminated
by temporal modeling alone. This tension has an asymmetric shape under
pooled ERM: nonlinear estimands are progressively distorted as mixture
breadth grows---pooled CDFs mix linearly; their inverses do not, while
linear-curvature objectives are pooling-tolerant yet converge efficiently
to estimands operationally vacuous on pathological data: \emph{scale
accelerates convergence, not correctness}. Real-world pathologies form a
continuous spectrum that hard partitions cannot cover, and the
train--deploy gap is therefore not a capacity deficit but a coupling error
between heterogeneous label distributions and globally unified
optimization; our hard cells are a coarse diagnostic instrument, stable to
$\pm20\%$ threshold perturbations (App.~\ref{app:threshold}), not a claim
of natural kinds. Foundation models retain transfer and cold-start
advantages, yet their shared-weight global-training paradigm incurs a
hidden cost that aggregate metrics conceal and RBV with our coverage
audit make visible---persisting even without gradient-based fine-tuning,
as zero-shot Chronos-Bolt evaluation on M5 confirms
(App.~\ref{app:chronos}). The two exposures are orthogonal and
loss-specific: MSE's conditional-mean estimand is doubly compromised
under pooling---squared amplification of heavy tails plus a mixture mean
that departs from every per-regime mean---while quantile loss bounds
per-sample penalties linearly and is therefore less sensitive to extreme
observations, yet remains exposed to cross-regime gradient contamination
through shared parameters; no loss replacement alone resolves the
pooling channel. The escape requires two keys: decomposition to dissolve
the cross-regime compromise, and a curvature-appropriate estimand to
avoid exposing the intrinsic one. Where slow-converging estimands are
unavoidable, partial pooling with strength adapted to per-cell data
volume---the designed regime of Fig.~\ref{fig:agg_tradeoff}---interpolates
along the spectrum. The gap is thus one of aggregation structure, not
model engineering, and RBV exposes its hidden miscalibration.

\textbf{Limitations \& Future Work.}
Unlike overfitting, bias arises from mismatched loss priors under perfect
temporal fitting; RBV diagnoses the aggregation trade-off but does not correct forecasts:
decomposition \emph{exposes} the mean--median gap rather than removing
it, and regime thresholds, though robust to $\pm20\%$ perturbation, may shift at
extreme boundaries. Our framework targets non-negative series and requires
recalibration for mixed-sign data. Standard pipelines are uncontested on
homogeneous data: the characterized failure requires the conjunction of
pathological regimes and globally pooled optimization.
The quantile family's mismatch suggests curvature-corrected objectives
(expectiles) as one direction~\citep{newey1987asymmetric,bellini2014generalized}.
Both escape keys are naturally instantiated: regime-gated or hierarchical
architectures implement decomposition at the designed operating point of
Fig.~\ref{fig:agg_tradeoff}, while mean-target losses tailored to
non‑negative heavy‑tailed labels (e.g., Tweedie
\cite{smyth2002fitting}) mitigate within‑regime estimand mismatch.
However, routing must group by distributional pathology in addition to
temporal pattern, and the combined design incurs costs justified only
when RBV audit confirms both failure modes active.
Extending the protocol to probabilistic forecasting is promising, with
RBV providing the diagnostic basis.

\vspace{-10pt}
\section{Conclusion}
\vspace{-10pt}
This work identifies a persistent, underexplored distributional-statistical
misspecification of canonical losses in industrial time-series forecasting.
We build upon classical pathological characterizations from
operations research, translating domain-specific modeling intuitions into a
general-purpose evaluation language for modern deep-learning forecasting
benchmarks. We construct a quantifiable hierarchical pathology taxonomy on
three foundational statistical dimensions and their couplings, and identify
a self-reinforcing evaluation loop: training losses and aggregate metrics
share congruent distributional priors, making global performance seemingly
consistent while concealing systematic regime-specific misalignment. To
expose and audit this hidden loop, we propose the regime-decomposed
diagnostic protocol RBV, which decomposes observed bias into a
model-independent intrinsic floor and a training-attributable excess
component, and we release \textbf{RetailShiftBench}, an industrial
benchmark with full pathological coverage, bridging academic global
accuracy ranking and industrial risk-aware forecasting. A large-scale
controlled study---$13$ loss objectives, $3$ seeds, $60{,}000$+ series
spanning RetailShiftBench and M5, with random-split controls---shows that
regime-aware diagnosis separates optimization-type from bias-type failure
and that regime-aware training resolves the pooling-induced bias it exposes,
for mean-type losses. A formal structural observation, that risk under
evaluation-distribution contamination is affine in the pathology mixture
weight, grounds these findings and is deliberately supporting rather than
foundational: the paper's contribution is empirical-first. Our core
contribution inverts conventional benchmarking logic: we audit dataset
pathological validity rather than blindly ranking models on fixed
benchmarks, shifting community research from empirical accuracy competition
to mechanism-grounded root-cause diagnosis. Beyond evaluation, our toolkit
forms an extensible infrastructure: future work can extend the
pathology-coverage audit to dataset industrial readiness, apply RBV to
diagnose emerging forecasting foundation models, and design regime-aligned
training objectives grounded in quantified trade-offs.

\subsection*{AI use statement}

In this work, generative AI tools were used only for language polishing, grammatical correction, and formal academic phrasing of the manuscript text. No generative AI tools were used for designing research ideas, building experimental pipelines, producing analytical results, or drawing conclusions. All AI-assisted content was fully reviewed, verified, and revised manually by the authors. The authors take full responsibility for all claims, analyses, and final content of this paper.

\subsection*{Reproducibility statement}

All experimental results in this paper are fully reproducible. Our
main benchmark, RetailShiftBench, is built from desensitized
real-world retail daily SKU time series (31,213 series of 89 time
steps each); dataset construction, regime calibration, and
pathology-coverage auditing are described in \S~\ref{sec:experimental_setup} and
App.~\ref{app:cross_domain}. A desensitized version of
RetailShiftBench preserving its core industrial pathological
structure is released with the supplementary materials. External
validation is performed on the public M5 competition
dataset~\cite{makridakis2022m5}, which is independent of
RetailShiftBench; the two datasets share no samples, and all
M5-specific protocols (120-day windowing, zero-run censoring,
single-origin evaluation) are described in
App.~\ref{app:m5_external}. We provide complete data
preprocessing pipelines, model training configurations, and metric
calculation scripts in the supplementary materials. All
hyperparameters, loss function settings, and evaluation protocols
are explicitly stated to ensure consistent replication. We further
release anonymous source code to facilitate full reproduction of
our empirical findings and analytical results.

\bibliography{iclr2027_conference}
\bibliographystyle{iclr2027_conference}

\clearpage
\appendix
\section{Aggregation-Driven Bias-Variance Trade-off Schematic}

\begin{figure}[H]
\centering
\includegraphics[width=0.92\linewidth]{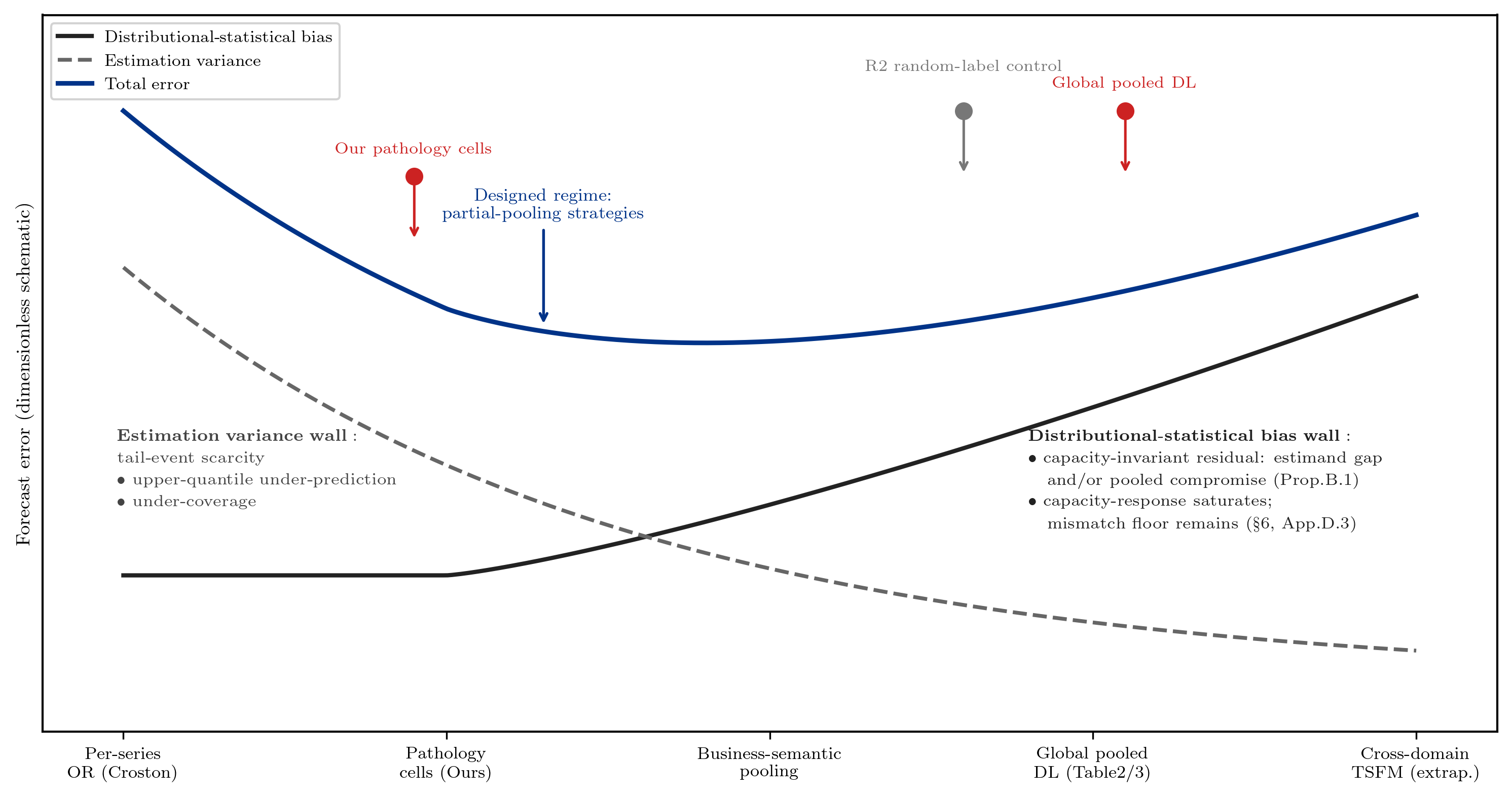}
\caption{Conceptual schematic of the aggregation-breadth trade-off between
distributional-statistical bias and estimation variance. The global-pooled,
random-cell (R2, random-label control), and pathology-cell markers correspond
to measured configurations (Tab.~\ref{tab:decompose_effect};
\S~\ref{app:quantile_spectra}, Tab.~\ref{tab:tab_loss_trajectory_grid}); remaining
markers are positional schematics without measured magnitude. Measured anchors
are consistent with, but do not locate, a total-error minimum near
pathology-aware subgrouping.}
\label{fig:agg_tradeoff}
\vspace{-15pt}
\end{figure}

Above we present a conceptual schematic of the qualitative bias-variance
trade-off under pooled multi-series forecasting; no closed-form curve is
derived---three operating points (global-pooled, random-cell R2,
pathology-cell) are measured (Tab.~\ref{tab:decompose_effect};
\S~\ref{app:quantile_spectra}). The trade-off characterizes settings with
strong cross-series distributional heterogeneity (zero-inflation, skewness,
high variability), where pooled training mixes distributions of divergent
statistical properties. For near-homogeneous collections (low PCS,
\S~\ref{app:pcs}), both walls recede and the trade-off flattens. It does not
apply to tasks dominated by temporal-structure challenges such as long-range
dependence, or to causal-inference problems unrelated to cross-series
distribution mixing. PCS serves as an empirical gauge of applicability:
predictions are load-bearing exactly where pathological coverage is high.

Under pooled empirical risk minimization, any loss optimizes for the
pooled-distribution optimum, which per Prop.~\ref{prop:decomposition}
constitutes a weighted compromise of subgroup-specific optima.
Accumulated distributional-statistical bias is largely invisible to the
training objective and poorly captured by aggregate metrics, as demonstrated
by quantile-coverage collapse that point-wise metrics reward
Fig.~\ref{fig:quantile_calibration}.
A further subtlety arises from the congruence between training-loss priors and
global evaluation metrics: even as aggregation pushes models toward the bias-wall of
Fig.~\ref{fig:agg_tradeoff}---the saturation point where more
global training no longer reduces regime bias---global scores stay
nearly unchanged, masking systematic deviations and motivating our RBV
diagnostic protocol.
Hierarchical partial-pooling approaches are designed to target this
intermediate regime; pathology-aware pooling to realize this operating point
is discussed in \S~\ref{sec:discussion}.
 
\section{Supplementary Formal Statements, Analytical Derivations and Robustness Validations}
\label{app:full}

This appendix provides supplementary formal derivations, analytical proofs, regularity validations, and robustness ablations to complement the main paper's theoretical framework and experimental conclusions.

\subsection{Loss--Marginal Conflict Verification \& Proposition Derivations}
\label{app:bias_verify}

\paragraph{Mechanistic verification.}
We provide case-by-case mechanistic verification for the core loss
misspecification claims (Tab.~\ref{tab:loss_misspecification}), confirming
that each foundational industrial pathology violates one or more of the
classical working conditions under which a single global loss prior is
statistically unambiguous, inducing a loss--prior/marginal conflict.
Under a symmetric marginal, the mean and median coincide and the choice
among mean-, median-, and quantile-consistent losses is immaterial; under
a continuous non-degenerate support, all implied quantiles are
well-defined and non-degenerate; and under homogeneous samples, uniform
weighting in empirical risk minimization weights all parts of the
marginal as intended. We verify that each foundational industrial
pathology breaks one or more of these conditions, so that the loss-implied
optimum becomes prior-dependent on the pathological marginal, inducing
distributional-statistical bias:

\begin{enumerate}[leftmargin=*]
    \item \textbf{Skewness Pathology ($|\gamma_1| \gg 0$):} MSE optimizes
    mean-centered residuals, while MAE optimizes median-based estimation.
    Severe skewness diverges mean and median values, leaving the two
    loss-implied optima distinct on the same marginal. Violation of the
    symmetric residual assumption creates a non-zero divergence between
    the two statistical optima, exposing the prior-dependence of the
    loss-implied optimum.

    \item \textbf{Zero-Inflation Pathology (High ZIR):} High zero-inflation
    violates the continuous support assumption of regression losses. Low
    quantile targets degenerate to zero, and MAE median estimation
    collapses toward trivial zero samples. Median-based optimization
    prioritizes fitting dominant zero observations, thereby discarding the
    high-demand burst mass, distorting the loss-implied optimum on the
    pathological marginal.

    \item \textbf{Within-series Scale Variability (High CV):} Canonical
    losses apply uniform sample weighting during risk minimization. High
    CV creates extreme imbalance between trivial low-magnitude samples
    and heavy-tailed demand bursts. Uniform weighting dilutes gradient
    contributions of high-impact samples, causing under-fitting of
    critical cases and prioritizing average accuracy over tail fidelity,
    so the fixed weighting prior mis-serves the pathological marginal.
\end{enumerate}

All three foundational pathologies break one or more of the classical
working conditions under which a single global loss prior is statistically
unambiguous, inducing a systematic loss--prior/marginal conflict and
completing the mechanistic verification.

\subsection{Standard Regularity Assumptions for Regime Trend Analysis}
\label{app:regularity}

To eliminate ambiguity in distribution perturbation and error decomposition, we state standard, industrially valid regularity assumptions for rigorous generalization error bounding, all naturally satisfied by retail time-series data and canonical loss functions.

\begin{assumption}[Bounded Industrial Demand Support]
\label{ass:bounded}
The industrial demand variable satisfies $y_t \in [0, Y_{\max}]$ for a finite constant $Y_{\max} < \infty$. This holds for all retail sales records, as real-world transaction volumes are strictly bounded.
\end{assumption}

\begin{assumption}[Lipschitz-Smooth Loss Function]
\label{ass:lipschitz}
Any fixed-form loss $\mathcal{L}(y,\hat{y})$ considered (MSE, MAE, Quantile, Huber, Tweedie, WMAPE) is $L$-Lipschitz continuous with respect to the prediction $\hat{y}$ over the bounded support $[0, Y_{\max}]$. Bounded Lipschitzness guarantees stable gradient behavior and controllable error perturbation under distribution shifts.
\end{assumption}

\begin{assumption}[Pathology-Induced Absolute Continuous Measure Transformation]
\label{ass:measure}
For each coupled pathological regime $p \in \mathcal{P}_{\mathrm{couple}}$, define the distribution shift operator $T_p$ as an absolutely continuous measure transformation that maps the benign baseline distribution $\mathcal{D}_{\mathrm{benign}}$ to the distorted pathological distribution $\mathcal{D}_p$. The transformation is parameterized by a normalized pathological intensity $\lambda(p) \in [0,1]$, where $\lambda(p)=0$ denotes benign baseline and $\lambda(p)=1$ denotes maximum pathological distortion. The Radon--Nikod\'{y}m derivative $d\mathcal{D}_p / d\mathcal{D}_{\mathrm{benign}}$ exists and is bounded for all valid $\lambda(p)$.
\end{assumption}

To make the contamination structure explicit, we first state the assumption
under which Prop.~\ref{prop:monotonic} is formalized, and then supplement
its detailed derivation to verify the monotonic regime-dependent failure
trend observed in experiments.

\begin{assumption}[Pathology as Distribution Contamination]\label{ass:contamination}
Each pathological family $\{\mathcal{D}_\lambda\}_{\lambda\in[0,1]}$ is
parameterized by contamination of the benign baseline:
$\mathcal{D}_\lambda=(1-\lambda)\,\mathcal{D}_{\mathrm{benign}}+\lambda\,\mathcal{D}_{\mathrm{path}}$,
where $\mathcal{D}_{\mathrm{path}}$ is a fixed pathological distribution on
the same support $[0,Y_{\max}]$. The intensity $\lambda$ is interpretable as
the fraction of pathological mass (e.g., zero-inflation rate, mixture
weight), and all expectations under $\mathcal{D}_\lambda$ are well-defined by
linearity of mixture distributions, with no ordering of Radon--Nikod\'{y}m
derivatives required.
\end{assumption}

\begin{proposition}[Loss‑relative error separation]\label{prop:decomposition}
Let $\varphi^*(\cdot\,|\,y_{<t})$ denote the deployment‑target functional
(e.g., the conditional mean under mean‑oriented operational cost, a
quantile under service‑level cost). For any loss $\mathcal{L}$ in the
zoo, define its statistically optimal forecast
$\hat{y}^{\mathrm{stat}}_{\mathcal{L}}(y_{<t}) = \arg\min_{\hat{y}}
\mathbb{E}_{y_t\sim P(\cdot|y_{<t})}\mathcal{L}(y_t,\hat{y})$
(the conditional functional elicited by $\mathcal{L}$: mean for MSE,
median for MAE, $q_\tau$ for pinball), and the loss‑relative bias
$b_{\mathcal{L}} := \hat{y}^{\mathrm{stat}}_{\mathcal{L}} - \varphi^*$.
The realized forecast decomposes relative to the deployment target as
$\hat{y}_t - \varphi^*(y_{<t}) = b_{\mathcal{L}}(y_{<t}) + e_t$, where
$e_t$ is the residual temporal‑fitting error. Under perfect temporal
fitting ($e_t = 0$), the remaining deviation $b_{\mathcal{L}}$ is
mechanistically independent of temporal modeling: a fixed loss prior
whose implied estimand misaligns with the deployment target induces
prediction bias that temporal modeling cannot remove. Two readings
follow. (i) If the target is the conditional mean, then
$b_{\mathrm{MSE}} \equiv 0$ and MSE exhibits no estimand‑mismatch
component; its regime‑level bias in Tab.~2 is then entirely the
aggregation component arising because pooled training
shifts the shared optimum toward a demand‑weighted cross‑regime
compromise that departs from every per‑regime mean. (ii) For losses
whose estimand differs from the target (e.g., MAE against a mean
target), $b_{\mathcal{L}}$ is nonzero already per distribution and is
further displaced under pooling. In both cases the
distributional‑statistical channel constitutes a loss--data mismatch
distinct from temporal‑structural bias.
\end{proposition}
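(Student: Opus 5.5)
The identity is definitional, so the plan is to show it holds by construction and then justify the two readings with standard elicitation facts. Define the residual temporal-fitting error as $e_t := \hat{y}_t - \hat{y}^{\mathrm{stat}}_{\mathcal{L}}(y_{<t})$, the gap between the realized forecast and the loss-elicited functional of the true conditional law. Adding and subtracting $\hat{y}^{\mathrm{stat}}_{\mathcal{L}}$ gives
\[
\hat{y}_t - \varphi^*(y_{<t}) = \bigl(\hat{y}^{\mathrm{stat}}_{\mathcal{L}}(y_{<t}) - \varphi^*(y_{<t})\bigr) + e_t = b_{\mathcal{L}}(y_{<t}) + e_t .
\]
For this to be meaningful, the $\arg\min$ must exist. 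Under Assumption~\ref{ass:bounded} and Assumption~\ref{ass:lipschitz}, the map $\hat{y}\mapsto \mathbb{E}\,\mathcal{L}(y_t,\hat{y})$ is continuous on the compact interval $[0,Y_{\max}]$, so a minimizer exists. Where the minimizer is not unique, as for the median of a zero-inflated law, I fix a measurable selection, say the smallest minimizer. That makes $b_{\mathcal{L}}$ a well-defined function of $y_{<t}$ alone.

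The separation claim then follows from what $b_{\mathcal{L}}$ depends on. It is a function of the pair $(\mathcal{L}, P(\cdot\mid y_{<t}))$ and the target $\varphi^*$ only, and involves no model. Any temporal model, however expressive, can at best drive $e_t$ to $0$. Its forecast then equals $\hat{y}^{\mathrm{stat}}_{\mathcal{L}}$, and the deviation $b_{\mathcal{L}}$ is unchanged. This is the precise sense in which temporal modeling cannot remove the bias. I will state it explicitly as a definitional separation, in the same way the bias--variance decomposition is definitional, rather than as a claim about joint optimization.

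For reading (i), I use the first-order condition of MSE. Because $\partial_{\hat y}\mathbb{E}(y-\hat y)^2 = -2\,\mathbb{E}(y-\hat y)$ vanishes at the conditional mean, $\hat{y}^{\mathrm{stat}}_{\mathrm{MSE}} = \mathbb{E}[y_t\mid y_{<t}] = \varphi^*$, and hence $b_{\mathrm{MSE}}\equiv 0$. The aggregation part uses the same computation under pooling. Let the pooled conditional law be a mixture $\sum_r \pi_r P_r$, with $\pi_r$ the regime weights (the demand-weighted version in the paper's reading). If a shared predictor cannot condition on regime, its pooled MSE optimum is $\sum_r \pi_r \mu_r$. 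This differs from each $\mu_r$ unless all the regime means coincide, so the regime-level bias for MSE is entirely this compromise term.

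For reading (ii), take MAE against a mean target. Then $b_{\mathrm{MAE}} = \mathrm{med} - \mathrm{mean}$, which is nonzero for any skewed or zero-inflated marginal; item~1 of \S\ref{app:bias_verify} supplies this. Under pooling, the median of the mixture is the inverse of the mixed CDF $\sum_r\pi_r F_r$. This generally equals neither any per-regime median nor their average, so pooling adds a further displacement on top of $b_{\mathrm{MAE}}$.

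The main obstacle is conceptual rather than computational. The aggregation statement implicitly treats $e_t$ as including the pooling-induced deviation, since the pooled optimum differs from the per-regime conditional functional. The proposition must therefore be read with $P(\cdot\mid y_{<t})$ as the law given the information available to the shared model. Regime heterogeneity that this conditioning cannot resolve then appears as a shift of the elicited functional, not as temporal error. I would make that conditioning explicit so that "perfect temporal fitting" and "pooling compromise" do not collide.
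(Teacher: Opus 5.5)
Your core argument is the paper's own. You add and subtract $\hat y^{\mathrm{stat}}_{\mathcal{L}}$, note that $b_{\mathcal{L}}$ depends on $(\mathcal{L},P(\cdot\mid y_{<t}),\varphi^*)$ alone, and read the separation as definitional in the bias--variance sense. Your version is the more faithful one. The paper's proof sets $\mathcal{B}_{\mathrm{temp}}=\hat y_t-\mathbb{E}[y_t\mid y_{<t}]=0$, i.e.\ it measures the temporal term against the conditional mean. That coincides with your $e_t:=\hat y_t-\hat y^{\mathrm{stat}}_{\mathcal{L}}$ only for MSE; for MAE it would pin the forecast to the mean rather than the median. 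The paper then shows $b_{\mathcal{L}}\neq0$ through closed-form optima on two-point laws: $10$ vs.\ $0$ under $P(y=0)=0.9$, $P(y=100)=0.1$, and $90$ vs.\ $100$ in the mirrored case. You instead appeal to general facts, and two of them are overstated.

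\emph{The median--mean gap.} $\mathrm{med}-\mathrm{mean}$ is not nonzero for every skewed or zero-inflated marginal. Take $P(y=0)=0.3$, $P(y=1)=0.67$, $P(y=11)=0.03$. It has ZIR $0.3$, $\gamma_1\approx4.95$ and CV $\approx1.82$, so it sits in the paper's ZIR--Skew--CV cell. Yet its mean and median both equal $1$, so $b_{\mathrm{MAE}}=0$. Use a sufficient condition instead: $P(y=0)\ge1/2$ with $\mathbb{E}y>0$ gives, under your smallest-minimizer selection, $b_{\mathrm{MAE}}=-\mathbb{E}y<0$. Alternatively, cite the paper's Case~A.

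\emph{The mixture mean.} With three or more cells, the mixture mean can coincide with an interior regime mean; take means $1,2,3$ with equal weights. What is true is that $\sum_r\pi_r\mu_r$ differs from $\max_r\mu_r$ and $\min_r\mu_r$ whenever these differ and all $\pi_r>0$. The paper's ``departs from every per-regime mean'' has the same flaw.

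Your closing caveat targets a real inconsistency that the paper's proof never addresses. \S\ref{app:capacity} calls the pooled compromise $b_L$ while also asserting $b_L\equiv0$ for MSE. However, your fix needs one more step.
\begin{itemize}
\item If $\hat y^{\mathrm{stat}}_{\mathcal{L}}$ is computed under the coarse law seen by the shared model while the target stays regime-resolved ($\varphi^*=\mu_r$), then $b_{\mathrm{MSE}}=\sum_{r'}\pi_{r'}\mu_{r'}-\mu_r\neq0$, and reading~(i) fails.
\item If the target is coarsened too, then $b_{\mathrm{MSE}}=0$, but the regime-level bias drops out of the two-term identity altogether.
\end{itemize}
Reading~(i) holds literally only with a third summand. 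Writing $\mathcal{G}$ for the shared model's information,
\[
\hat y_t-\varphi^*_r=\bigl(\hat y^{\mathrm{stat}}_{\mathcal{L},\mathcal{G}}-\varphi^*_{\mathcal{G}}\bigr)+\bigl(\varphi^*_{\mathcal{G}}-\varphi^*_r\bigr)+e_t .
\]
The first term is the estimand mismatch, which is zero for MSE under a mean target. The second is the aggregation compromise, equal to $\sum_{r'}\pi_{r'}\mu_{r'}-\mu_r$ for a mean target. The third, $e_t$, is fitting error relative to $\hat y^{\mathrm{stat}}_{\mathcal{L},\mathcal{G}}$. Stated this way, ``$e_t=0$'' and ``pooling compromise'' no longer collide. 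The extra displacement in reading~(ii) is then simply $\mathrm{med}_{\mathcal{G}}-\mathrm{med}_r$.
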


\begin{proposition}[Risk of a fixed predictor under evaluation-distribution contamination]\label{prop:monotonic}
Fix any predictor $\hat{y}$, any loss $\mathcal{L}$ whose expectations below exist, and write
\[
R_{\mathrm{benign}}:=\mathbb{E}_{\mathcal{D}_{\mathrm{benign}}}[\mathcal{L}(y,\hat{y})],
\qquad
R_{\mathrm{path}}:=\mathbb{E}_{\mathcal{D}_{\mathrm{path}}}[\mathcal{L}(y,\hat{y})].
\]
Under Assumption~\ref{ass:contamination}, the risk under the contaminated mixture
\begin{equation}
R(\lambda)
:=\mathbb{E}_{\mathcal{D}_\lambda}[\mathcal{L}(y,\hat{y})]
=(1-\lambda)\,R_{\mathrm{benign}}+\lambda\,R_{\mathrm{path}},
\label{eq:riskaffine}
\end{equation}
is affine in $\lambda$ with slope $R_{\mathrm{path}}-R_{\mathrm{benign}}$, and is
non-decreasing (strictly increasing) in $\lambda$ whenever
$R_{\mathrm{path}}\geq\,(>)\;R_{\mathrm{benign}}$,
i.e., whenever the pathological distribution raises the expected loss of the
fixed predictor above its benign-data level.
\end{proposition}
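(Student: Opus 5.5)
The plan is to obtain \eqref{eq:riskaffine} directly from linearity of the integral with respect to the mixing measure, and then read off monotonicity from the sign of the slope. First, I would fix the predictor $\hat{y}$ and regard the integrand $g(y):=\mathcal{L}(y,\hat{y})$ as a fixed measurable function on $[0,Y_{\max}]$; the predictor is held fixed while only the evaluation distribution varies, so $g$ does not depend on $\lambda$. Under Assumption~\ref{ass:contamination}, $\mathcal{D}_\lambda=(1-\lambda)\mathcal{D}_{\mathrm{benign}}+\lambda\mathcal{D}_{\mathrm{path}}$ as measures, so for any measurable set $A$ we have $\mathcal{D}_\lambda(A)=(1-\lambda)\mathcal{D}_{\mathrm{benign}}(A)+\lambda\mathcal{D}_{\mathrm{path}}(A)$.

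The key step is to lift this identity from indicators to $g$. It extends by linearity to simple functions, and then by monotone convergence to the nonnegative functions $g^{+}$ and $g^{-}$ separately. The only point needing care is integrability. The statement assumes that $R_{\mathrm{benign}}$ and $R_{\mathrm{path}}$ exist. If they are finite, then $\int g^{\pm}\,d\mathcal{D}_\lambda$ are nonnegative combinations of finite quantities, so $\mathbb{E}_{\mathcal{D}_\lambda}[g]$ is finite and equals $(1-\lambda)R_{\mathrm{benign}}+\lambda R_{\mathrm{path}}$. Under Assumptions~\ref{ass:bounded} and~\ref{ass:lipschitz} the loss is in fact bounded on the compact support, so finiteness is automatic. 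I would remark on this but not rely on it, and I would note that the canonical losses are nonnegative, so $g^{-}=0$ in practice. This step is the only place a genuine argument is needed: since the mixture is convex, no Radon--Nikod\'ym ordering is required, in contrast to Assumption~\ref{ass:measure}.

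Next, I would rewrite the identity as $R(\lambda)=R_{\mathrm{benign}}+\lambda\,(R_{\mathrm{path}}-R_{\mathrm{benign}})$. This is visibly affine, with the stated slope. For $0\le\lambda_1<\lambda_2\le1$, we get $R(\lambda_2)-R(\lambda_1)=(\lambda_2-\lambda_1)(R_{\mathrm{path}}-R_{\mathrm{benign}})$. This difference is $\ge0$ when $R_{\mathrm{path}}\ge R_{\mathrm{benign}}$ and $>0$ when the inequality is strict, which gives the non-decreasing and strictly increasing claims respectively.

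I do not expect a real obstacle here. The one subtlety worth stating explicitly is that the affine form holds only because $\hat{y}$ is fixed and not re-optimized on $\mathcal{D}_\lambda$. For a retrained predictor, the risk $\min_{\hat y}R(\lambda;\hat y)$ is a pointwise minimum of affine functions, and is therefore concave rather than affine in $\lambda$. I would add this as a one-line remark to delimit the scope of the proposition.
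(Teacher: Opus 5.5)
Your proposal is correct and follows the same route as the paper: linearity of expectation under the convex mixture $\mathcal{D}_\lambda=(1-\lambda)\mathcal{D}_{\mathrm{benign}}+\lambda\mathcal{D}_{\mathrm{path}}$ gives $R(\lambda)=R_{\mathrm{benign}}+\lambda(R_{\mathrm{path}}-R_{\mathrm{benign}})$, and monotonicity is read off the sign of the slope. The paper invokes linearity in a single line, so your indicator$\to$simple-function$\to$monotone-convergence justification and your concavity remark for a re-optimized predictor (which matches the paper's scope note) add rigor without changing the argument.
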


\begin{proof}
By linearity of expectation under the mixture
$\mathcal{D}_\lambda=(1-\lambda)\mathcal{D}_{\mathrm{benign}}+\lambda\mathcal{D}_{\mathrm{path}}$
(Assumption~\ref{ass:contamination}),
\[
R(\lambda)=(1-\lambda)\,R_{\mathrm{benign}}+\lambda\,R_{\mathrm{path}}
=R_{\mathrm{benign}}+\lambda\big(R_{\mathrm{path}}-R_{\mathrm{benign}}\big),
\]
which is affine in $\lambda$ with slope $R_{\mathrm{path}}-R_{\mathrm{benign}}$;
the slope is non-negative (positive) exactly under the stated condition.
\end{proof}

\noindent\textbf{Scope.} This analysis holds for a fixed predictor $\hat{y}$:
it characterizes how contamination of the \emph{evaluation} distribution
affects a given predictor, and does not characterize outcomes when the
predictor is re-optimized on training data with varying pathology fractions.

\paragraph{Verifying the premise.}
The premise is checkable for standard losses. For MSE, writing
$\mu_\lambda=\mathbb{E}_{\mathcal{D}_\lambda}[y]$ and
$\sigma^2_\lambda=\mathrm{Var}_{\mathcal{D}_\lambda}(y)$, the bias--variance
identity
$\mathbb{E}_{\mathcal{D}_\lambda}[(y-\hat{y})^2]=\sigma^2_\lambda+(\mu_\lambda-\hat{y})^2$
shows that the premise holds whenever the pathological family raises total
dispersion or shifts the mean away from the benign-data level toward which
the predictor was tuned. For the pinball loss at level $\tau$, the identity
$\mathbb{E}[\rho_\tau(y-\hat{y})]=\tau(\mu-\hat{y})+\mathbb{E}[(\hat{y}-y)_+]$
reduces the premise to a two-term comparison between mean shift and
below-prediction shortfall; zero inflation, e.g., adds mass at $y=0$ and
raises $\mathbb{E}[(\hat{y}-y)_+]$ for any $\hat{y}>0$. We do not claim the
premise holds universally; rather, we verify it in our experiments:
Tab.~\ref{tab:regime_perf} reports per-regime risks of predictors trained
on pooled data, and Fig.~\ref{fig:dose_response_ablation} shows the dose--response
trend, jointly confirming the premise for all coupled pathologies and losses
studied.

\subsection{RBV: Template, Interpretation, and Uncertainty Quantification}
\label{app:rbv_stats}
\paragraph{Unified template and three instantiations.}
All RBV variants used in this paper are instances of one template. Let
$\{I_r\}_{r=1}^{R}$ be a fixed assignment of test series to regimes (a
series may belong to several regimes; membership indicators
$m_r(i)\in\{0,1\}$ replace the partition weights below where overlap
occurs), let $\phi_r = \phi\big((\hat y_i, y_i)_{i \in I_r}\big)$ denote
any regime-level functional of predictions and realized labels, and let
$w_r \propto \sum_i m_r(i)$ be count-based weights, normalized so that
$\sum_r w_r = 1$; for disjoint partitions this reduces to $w_r = |I_r|/N$.
For pathology bias-RBV the centering is the pooled bias of
\S\ref{sec:rbv_def}, i.e., $w_r \propto \alpha_r$ (the absolute-demand
mass of cell $r$) rather than the count share.
The centered deviation vector and its aggregation are defined as
\begin{equation}
\Delta B_{\phi,r} \;=\; \phi_r - \bar\phi,
\qquad
\bar\phi \;=\; \textstyle\sum_{r=1}^{R} w_r\,\phi_r,
\qquad
\mathrm{RBV}_{\phi} \;=\; \big\|\Delta B_{\phi}\big\|_2 .
\label{eq:rbv-template}
\end{equation}
The metric-agnostic protocol of \S\ref{sec:metric_agnostic_protocol} is precisely the
freedom to instantiate the functional slot $\phi$; this appendix makes the
three instantiations used in the paper explicit
(Tab.~\ref{tab:rbv-family}).
\begin{table}[H]
\caption{The RBV family: one template, three instantiations. All variants
share the same centering and $\ell_2$ aggregation, and differ only in the
base functional $\phi_r$. $E_r(\tau)$ is the empirical exceedance under
the mid-convention
$\widehat{\Pr}[\hat q_\tau > y] + \tfrac{1}{2}\,\widehat{\Pr}[\hat q_\tau = y]$
(Figs.~\ref{fig:quantile_calibration}, \ref{fig:fig_m5_quantile_calibration});
$\hat S_r = \sum_{i \in I_r} \hat y_i$ is the regime-level predicted total, and
$B_r = \big(\sum_{i \in I_r}(\hat y_i - y_i)\big)/\big(\sum_{i \in I_r}|y_i|\big)$
denotes the regime-wise pooled bias of \S\ref{sec:rbv_def}.}
\label{tab:rbv-family}
\centering
\small
\setlength{\tabcolsep}{4pt}
\begin{tabularx}{\textwidth}{@{}p{2.8cm}>{\centering\arraybackslash}p{3.6cm}p{1.8cm}X@{}}
\toprule
\textbf{Variant} & \textbf{Base functional $\phi_r$} & \textbf{Units} & \textbf{Question answered} \\
\midrule
Pathology bias-RBV (default, \S\ref{sec:rbv_def})
  & $\frac{\sum_{i \in I_r} (\hat y_i - y_i)}{\sum_{i \in I_r} |y_i|}$
  & demand units
  & Which regime is systematically over-/under-estimated, and by how much? \\
\addlinespace[3pt]
Quantile RBV
  & $\phi_r(\tau) = E_r(\tau) - \tau$
  & probability
  & Where -- and at which $\tau$ -- is calibration error concentrated? \\
\addlinespace[3pt]
Share-normalized RBV
  & $B_r / \hat S_r$
  & dimensionless (relative bias)
  & How is mismatch severity allocated per unit of business volume? \\
\bottomrule
\end{tabularx}
\end{table}

\paragraph{Reading guidance.}
Three caveats govern the use of the RBV family.
\emph{(i) No cross-variant comparison of magnitudes.} The three variants
measure different objects on different scales (Tab.~\ref{tab:rbv-family});
quantitative statements are always within-variant, and cross-variant
statements are ranking-level only.
\emph{(ii) Quantile RBV is informative only above the estimand floor.}
This per-$\tau$ reading is not a refinement but a necessity: quantiles
are not additive under aggregation---the sum of per-series
$\tau$-quantiles equals the $\tau$-quantile of the total only under
comonotonic dependence---so no single scalar RBV can summarize the
cross-regime bias structure of a quantile forecaster, and interpretation
is per-$\tau$ by construction.
For a regime with zero-inflation rate $\mathrm{zir}_r$ (the fraction of zero
realizations in regime $r$), a perfectly calibrated $\tau$-quantile
predictor degenerates to zero whenever
$\tau \le \mathrm{zir}_r$, and its expected exceedance under the
mid-convention equals $\mathrm{zir}_r/2$ regardless of fit quality -- the
estimand floor of \S\ref{sec:rbv_def}, annotated in
Figs.~\ref{fig:quantile_calibration}--\ref{fig:fig_m5_quantile_calibration}.
Hence $\Delta B_{E(\tau),r}$
with $\tau \le \mathrm{zir}_r$ reflects estimand structure rather than
miscalibration; quantile RBV is interpreted only on the band
$\tau > \max_{r \in \text{pathological regimes}} \mathrm{zir}_r$, with floor-pinned
components reported as reference lines and excluded from interpretation.
\emph{(iii) Conservation holds under the centering weights only.} By
construction the \emph{weighted} sum $\sum_r w_r\,\Delta B_{\phi,r} = 0$
(the unweighted sum $\sum_r \Delta B_{\phi,r}$ is generally non-zero), and
the bidirectional sign pattern is one estimand gap read from its two sides
\emph{under the weights $w$ that define $\bar\phi$}. For the share-normalized variant the
centering weights $w_r$ must be distinguished from the normalization
denominators $\hat S_r$: if $\bar\phi$ is formed under a weighting other
than $w$ (e.g., volume shares), the conservation identity and the
bidirectional reading must be restated under \emph{those} weights; mixing
the two weight systems silently breaks the identity. A guard
$\hat S_r \ge \varepsilon$ excludes degenerate all-zero cells from the
share-normalized variant.

\paragraph{Remediating a high RBV reading.}
RBV is a diagnostic, not a verdict; a high value prescribes a
regime-localized intervention path. We recommend a three-step reading.
(1)~\emph{Attribute}: rank regimes by $|B_r - B_{\mathrm{global}}|$ (equivalent to $|\Delta B_{\phi,r}|$ under pathology-bias RBV); the
dominant cell identifies which pathology the pooled model sacrifices.
(2)~\emph{Classify}: distinguish the two layers using a diagnostic
per-regime retrain test; if within-cell bias persists
(Fig.~\ref{fig:capacity_mae}), the residual is estimand mismatch
(Layer~1), otherwise the pooled compromise itself (Layer~2,
cf.~Fig.~\ref{fig:capacity_mse}).
(3)~\emph{Intervene}: for Layer~1, adopt a curvature-appropriate estimand
(quantile at the operational critical ratio, Tweedie for intermittent
positive demand); for Layer~2, dissolve the pooling via regime decomposition
or partial pooling with strength adapted to per-cell volume
(Fig.~\ref{fig:agg_tradeoff}). Capacity increases reduce RBV only down to a
loss-dependent floor (the pooled compromise) and cannot remove it; they should not be the first response.
\paragraph{Uncertainty quantification.} To verify that the regime-level
diagnostics on M5 are not artifacts of a particular train/test split, we
recompute $B_r$ and RBV under SKU-level bootstrap resampling
($B{=}1000$ replicates; SKUs are the i.i.d.\ unit, statistics are
recomputed by pooling samples within each replicate, exactly matching
the evaluation convention of Sec.~\ref{app:cross_domain}). Point estimates
reproduce the main table to three decimals
(Tab.~\ref{tab:rbv_ci}). All cell-level biases are significant at
the 1\% level; RBV 95\% confidence intervals are tight (half-width
${\le}0.05$, e.g., $0.409\,[0.391,0.427]$ for MAE and
$0.101\,[0.080,0.123]$ for the quantile compromise at
$\tau{=}0.6$), far smaller than between-cell gaps, so the loss
ranking by RBV is stable. The single interval containing zero is the
pooled bias of the $\tau{=}0.6$ quantile compromise
($-0.001\,[-0.010,0.009]$), consistent with its near-unbiased
aggregated profile.

\begin{table}[t]
\centering
\caption{Bootstrap uncertainty (SKU-level, $B{=}1000$) for the pooled
bias $B$ and regime bias variation (RBV), single-origin M5 protocol.
All RBV intervals exclude zero.}
\label{tab:rbv_ci}
\small
\begin{tabular}{lcc}
\toprule
Loss & $B$ [95\% CI] & RBV [95\% CI] \\
\midrule
MAE      & $-0.257$ $[-0.267,-0.248]$ & $0.409$ $[0.391,0.427]$ \\
MSE      & $ 0.058$ $[ 0.048, 0.068]$ & $0.419$ $[0.387,0.450]$ \\
Huber    & $-0.107$ $[-0.115,-0.098]$ & $0.241$ $[0.215,0.265]$ \\
Tweedie  & $ 0.054$ $[ 0.044, 0.063]$ & $0.378$ $[0.350,0.407]$ \\
Poisson  & $ 0.058$ $[ 0.049, 0.068]$ & $0.477$ $[0.445,0.507]$ \\
Quantile $\tau{=}0.1$ & $-0.859$ $[-0.866,-0.852]$ & $0.229$ $[0.218,0.242]$ \\
Quantile $\tau{=}0.2$ & $-0.740$ $[-0.749,-0.730]$ & $0.373$ $[0.359,0.390]$ \\
Quantile $\tau{=}0.3$ & $-0.609$ $[-0.619,-0.599]$ & $0.464$ $[0.447,0.479]$ \\
Quantile $\tau{=}0.4$ & $-0.451$ $[-0.461,-0.442]$ & $0.494$ $[0.477,0.510]$ \\
Quantile $\tau{=}0.6$ & $-0.001$ $[-0.010, 0.009]$ & $0.101$ $[0.080,0.123]$ \\
Quantile $\tau{=}0.7$ & $ 0.336$ $[ 0.324, 0.349]$ & $0.588$ $[0.544,0.628]$ \\
Quantile $\tau{=}0.8$ & $ 0.739$ $[ 0.721, 0.757]$ & $1.305$ $[1.243,1.369]$ \\
Quantile $\tau{=}0.9$ & $ 1.348$ $[ 1.322, 1.375]$ & $2.088$ $[1.996,2.172]$ \\
\bottomrule
\end{tabular}
\end{table}




\paragraph{Computation.}
RBV is computed in two passes over the evaluation pool; pseudocode is
given in Alg.~\ref{alg:rbv}. Pass~1 accumulates, per regime cell
$r \in \mathcal{R}$, the residual mass $\rho_r = \sum_{i \in r}
(\hat y_i - y_i)$ and the absolute demand mass $\alpha_r = \sum_{i \in
r} |y_i|$, where $i$ ranges over all finite (series, horizon) pairs in
the pooled evaluation window; the pooled bias is $B = (\sum_r \rho_r) /
(\sum_r \alpha_r)$. Pass~2 forms the centered vector
$B_r - B$ with $B_r = \rho_r / \alpha_r$ and returns its unweighted
$\ell_2$ norm $\mathrm{RBV} = \bigl(\sum_r (B_r - B)^2\bigr)^{1/2}$.
Cells with $\alpha_r = 0$ (or fewer samples than the censoring floor)
are excluded up front and RBV is reported as undefined if any of the
four retained cells is empty, so the statistic is never computed on a
partial regime partition. Both passes are $O(N)$ in the number of
evaluated samples $N$ (a single streaming sweep with four accumulators
per cell; no storage of per-sample residuals is required), and the
entire M5 evaluation ($N = 187{,}222$) completes in well under a second
on a single CPU core. The same two-pass accumulation yields per-cell
bootstrap replicates directly: resampling series rather than samples
amounts to re-weighting the per-series contributions to
$(\rho_r, \alpha_r)$, so Tab.~\ref{tab:rbv_ci} costs $O(B \cdot
N_{\mathrm{sku}})$, not $O(B \cdot N)$.

\begin{algorithm}[t]
\caption{Two-pass computation of pooled bias and RBV}
\label{alg:rbv}
\begin{algorithmic}[1]
\Require predictions $\{\hat y_i\}$, truths $\{y_i\}$, regime map
$c(i) \in \{\mathrm{benign}, \mathrm{zir}, \mathrm{skew}, \mathrm{cv}\}$
\State $(\rho_r, \alpha_r) \gets (0, 0)$ for all $r$ \Comment{Pass 1:
accumulate per-cell masses}
\ForAll{finite evaluation pairs $i$ with $y_i$ observed}
    \State $\rho_{c(i)} \mathrel{+}= \hat y_i - y_i$;\quad
           $\alpha_{c(i)} \mathrel{+}= |y_i|$
\EndFor
\State $B \gets \bigl(\sum_r \rho_r\bigr) \big/ \bigl(\sum_r \alpha_r\bigr)$
\ForAll{$r$} \Comment{Pass 2: centered $\ell_2$ norm}
    \State $B_r \gets \rho_r / \alpha_r$
\EndFor
\State \Return $B$,\; $\mathrm{RBV} \gets \bigl(\sum_r (B_r - B)^2\bigr)^{1/2}$
\end{algorithmic}
\end{algorithm}

\subsection{Threshold Perturbation Sensitivity Analysis}
\label{app:threshold}

We recompute the M5 regime map and RBV under one-at-a-time threshold
scans ($t_{\rm zir}\in[0.10,0.50]$, $t_{\rm skew}\in[0.25,2.00]$,
$t_{\rm cv}\in[1.00,3.00]$), $7{\times}7$ pairwise joint grids, and
boundary stress tests (all thresholds doubled/halved). Three findings.
(i)~\emph{Rankings and signatures are threshold-stable:} the loss-family
ordering, the sign pattern of per-cell bias, and the U-shaped
RBV($\tau$) structure are invariant across all 39 settings; the skew
threshold is the least influential (RBV spread ${<}0.05$ for 12 of 13
losses).
(ii)~\emph{Magnitude tracks cell extremity:} RBV's sensitivity is
dominated by $t_{\rm cv}$ ($0.25$--$2.48$ spread), because a higher
threshold shrinks the CV cell toward the distribution's tail
($10{,}746 \to 89$ SKUs at the loose boundary), mechanically inflating
$|B_{\rm cv}|$; Cross-setting RBV comparisons should therefore always be read against
the cell sizes that generate them (per setting: Fig.~\ref{fig:app_rbv_threshold_sens}
and above); in the main tables the regime map is fixed by protocol, so
cell sizes are constant across columns and omitted.
(iii)~\emph{The compromise optimum is robust to plausible perturbation:}
ql60 attains the minimum RBV under $\pm20\%$ threshold shifts in all
settings; only under the 2$\times$ extreme tightening does Huber join it
in a low-RBV plateau ($0.046$ vs.\ $0.059$).

\begin{figure}[t!]
  \centering
  \includegraphics[width=\textwidth]{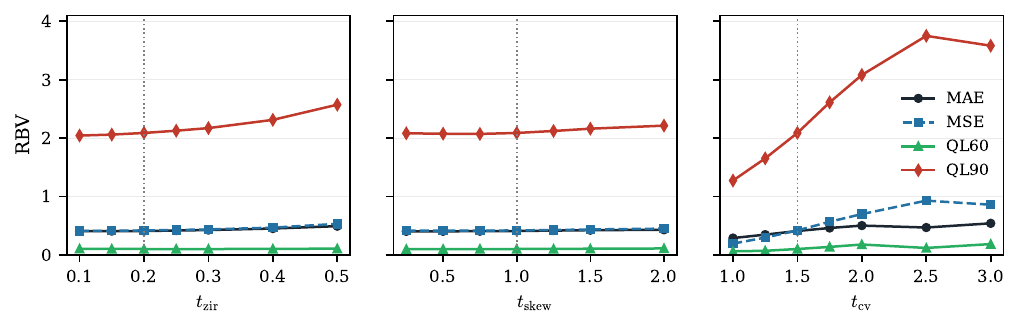}
  \caption{One-at-a-time threshold sensitivity of regime-decomposed RBV on M5.
  Each panel varies one threshold ($t_{\rm zir}$, $t_{\rm skew}$, $t_{\rm cv}$)
  with the other two fixed at baseline ($0.20/1.00/1.50$, dotted line);
  the remaining 9 losses behave similarly and are omitted for clarity.
  Rankings and the U-shaped RBV--$\tau$ structure are invariant across all
  threshold settings; the dominant sensitivity is along $t_{\rm cv}$, whose
  spread tracks the extremity of the CV cell (89--10{,}746 SKUs across
  settings) rather than the threshold definition itself.
  Note: for this sensitivity analysis, regime cells are formed directly
  from the continuous pathology statistics (ZIR rate, skewness, CV) without
  applying the \S\ref{sec:taxonomy} thresholds, so RBV magnitudes are not
  directly comparable to Table~\ref{tab:regime_perf} (which uses the
  thresholded regime labels); only rankings and shapes are comparable.}
  \label{fig:app_rbv_threshold_sens}
\end{figure}

\subsection{Metric-Agnostic Validation}
\label{app:metric_agnostic}
To verify that the regime-dependent failure patterns are intrinsic to loss-data distributional mismatch rather than artifacts of the evaluation metric, we reproduce the regime-decomposed evaluation under three distinct functionals: MSE, MAE, and pinball loss.

Across all metrics, the relative failure rankings remain invariant: MSE-trained models exhibit the most severe degradation under high zero-inflation rate (ZIR); MAE-trained models degrade most severely under strong skewness; and Quantile, Huber, and Tweedie losses show prominent fragility under high conditional variability (CV). Absolute values scale with each metric’s sensitivity to tail risk (e.g., MSE amplifies burst errors quadratically, while MAE treats them linearly), but the structural misspecification signatures are metric-independent. This confirms that the regime-specific failures reported in Sec~\ref{sec:rbv_def} originate from the inherent mismatch between canonical loss statistical priors and pathological data distributions, not from the choice of evaluation functional.

\subsection{Pathological Coverage Score (PCS): Dataset Auditing Metric}
\label{app:pcs}

This appendix formalizes the Pathological Coverage Score used for the benchmark coverage audit
(App.~\ref{app:cross_domain}, Tab.~\ref{tab:benchmark_pcs}). To quantitatively audit dataset
industrial validity, we define PCS ($0$--$1$), measuring comprehensive pathological
regime coverage:
\begin{equation}
\text{PCS} = 1 - \frac{1}{3}\sum_{d \in \{\text{ZIR},\text{skew},\text{CV}\}} \frac{|\{i:\text{regime}_d(i)=\text{low}\}|}{N}
\label{eq:pcs}
\end{equation}
Higher PCS indicates a richer pathological spectrum; we measure from the
low end to maximize discriminative range in overwhelmingly benign
industrial data, complementing regime-decomposed evaluation which
isolates severe high-pathology misalignments.

\paragraph{Weight robustness.} PCS adopts equal weighting for the three
pathological dimensions by default. We ablate alternative weight
configurations $(0.2, 0.4, 0.4)$ and $(0.4, 0.3, 0.3)$ and find that
weight perturbations do not change the relative PCS ordering of
mainstream datasets, confirming that the audit conclusions are robust
to the weighting scheme.

\subsection{Synthetic Illustration for Prop.~\ref{prop:decomposition}}
\label{app:proof_separation}
\begin{proof}[Proof of Prop.~\ref{prop:decomposition}]
\[
\mathcal{B}_{\text{temp}}=\hat{y}_{t}-\mathbb{E}\big[y_{t}\mid y_{<t}\big] = 0.
\]
However, canonical losses embed fixed statistical priors and optimize static distributional functionals (mean for MSE, median for MAE) that are prior-dependent rather than data-determined: under non-Gaussian industrial pathologies, distinct losses induce distinct optima on the same marginal.
This demonstrates that distributional-statistical mismatch induces systematic estimand mismatch independently of temporal modeling quality.
\end{proof}

In these synthetic illustrative experiments, we do not train any forecasting model. Instead, we directly adopt the statistically-optimal closed-form predictor for each loss under the given pathological marginal distribution. This construction enforces \(B_{\mathrm{temp}}=0\) by design: temporal-structural bias is identically zero, isolating purely distributional-statistical misspecification effects. Real-world neural-model experiments (Tab.~\ref{tab:regime_perf}) later demonstrate how these effects manifest alongside temporal fitting errors in practical settings.

We present two symmetric discrete industrial distributions to illustrate the bidirectional pathology mechanism.
We emphasize that these deliberately simplified two-point cases are pedagogical illustrations of the core mechanism, 

\textbf{Case A (zero-dominant demand):}
$P(y=0)=0.9,\; P(y=100)=0.1$. MSE and MAE yield statistically optimal
estimates of $10$ and $0$, respectively: the two loss-implied estimands
differ by an order of magnitude on the same distribution, and their
ranking under quantile evaluation flips at the staircase boundary below.
This distribution produces degenerated quantile solutions:
\[
Q_\tau =
\begin{cases}
0, & \tau \in [0, 0.9], \\
100, & \tau \in (0.9, 1].
\end{cases}
\]
Small perturbations near the mass boundary trigger abrupt quantile
prediction jumps.

\textbf{Case B (burst-dominant demand):}
$P(y=0)=0.1, P(y=100)=0.9$. Statistically optimal MSE ($90$) and MAE ($100$) predictions again diverge by construction, confirming that the estimand is prior-dependent.

\begin{figure}[h!]
\centering
\includegraphics[width=\linewidth]{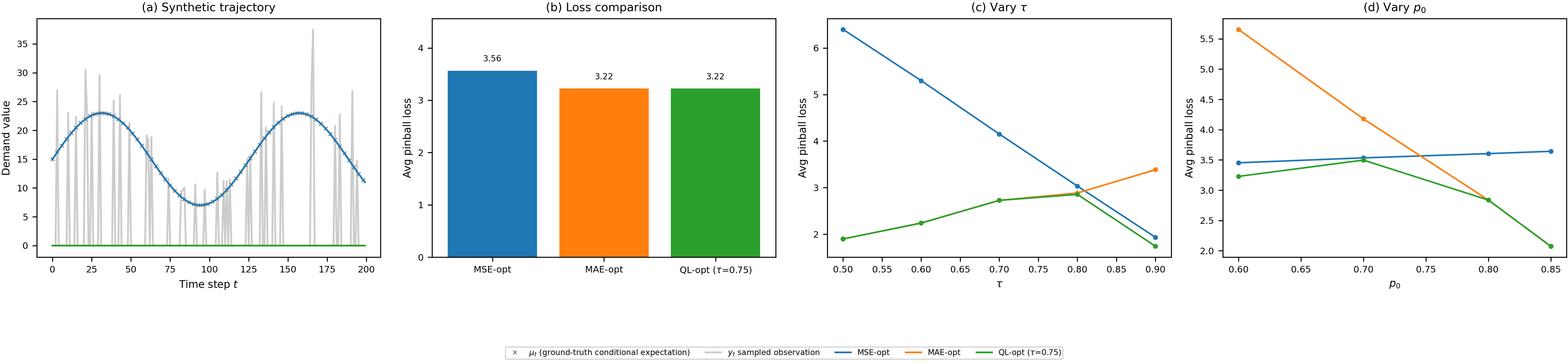}
\caption{Synthetic illustration and parameter sensitivity for
Prop.~\ref{prop:decomposition}.
(a) Example trajectory under perfect temporal modeling $\mathcal{B}_{temp}=0$.
(b) Quantitative comparison of loss-implied predictors (MSE-opt, MAE-opt,
and a fixed-quantile target) under pinball evaluation.
(c) Ablation over quantile level $\tau$: the ranking of predictors flips
as the target functional varies, illustrating that the optimal estimator
is prior-dependent.
(d) Ablation over zero-inflation rate $p_0$.
This figure serves as an illustrative synthetic example; formal statements
and proofs are given in the main text.}
\label{fig:theorem_synth}
\end{figure}

\begin{figure}[h!]
\centering
\includegraphics[width=\linewidth]{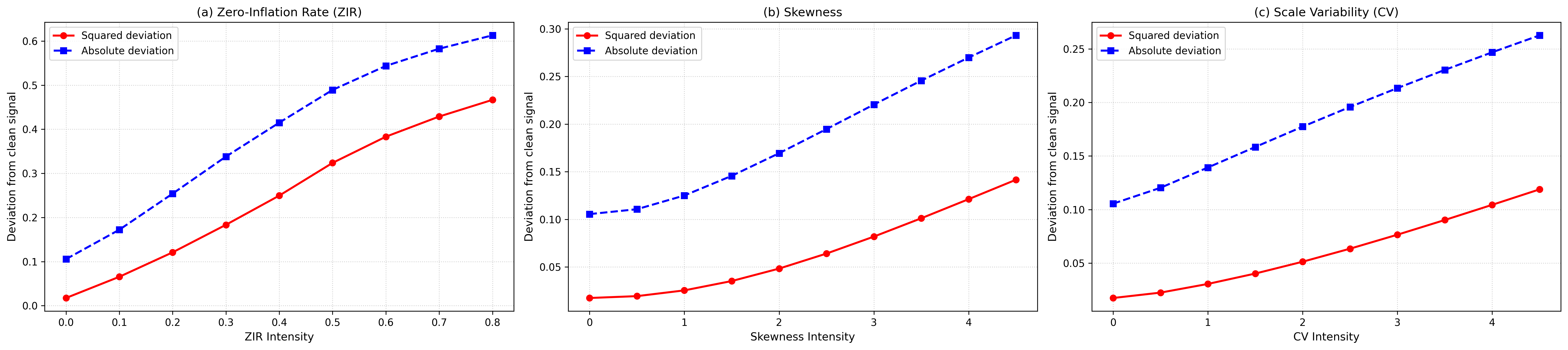}
\caption{Dose-response ablation over three core pathological dimensions:
zero-inflation rate (ZIR), skewness, and within-series scale variability (CV).
We report squared and absolute deviation of the fitted predictor from the
clean conditional trajectory as pathological intensity increases.
Consistently, both deviation metrics rise monotonically with pathological
severity, demonstrating that pathological marginals systematically corrupt
the recovery of the underlying signal.
This synthetic experiment complements the theoretical analysis in
Prop.~\ref{prop:decomposition}.}
\label{fig:dose_response_ablation}
\end{figure}
\FloatBarrier
These two symmetric cases demonstrate that the divergence of loss-implied
estimands originates from fixed loss statistical priors, independent of both
optimization quality and the direction of data sparsity.
Prop.~\ref{prop:decomposition} establishes that distributional-statistical
misspecification is a standalone failure mode even under perfect temporal
modeling.
Fig.~\ref{fig:theorem_synth} provides synthetic trajectory visualization,
a quantitative comparison across loss-implied predictors, and
hyper-parameter ablation over quantile level $\tau$ and zero-inflation
rate $p_0$.
Complementing this sensitivity study, Fig.~\ref{fig:dose_response_ablation}
shows the dose-response behaviour for the three foundational pathological
dimensions: zero-inflation rate (ZIR), skewness, and within-series scale
variability (CV).
As pathological intensity gradually increases, both squared and absolute
deviation from the clean signal grow monotonically.
This reinforces our core claim: fixed-prior loss functions induce
systematic, prior-dependent prediction deviation under industrial-style
distributional pathologies, even when temporal patterns are perfectly
fitted ($\mathcal{B}_{temp}=0$).
This synthetic example illustrates the underlying mechanism: accurate
temporal fitting alone cannot eliminate deviation originating from
loss-data statistical mismatch.

The regime-decomposed protocol \S~\ref{sec:metric_agnostic_protocol}
accepts arbitrary base-level evaluation functionals as input.
In Step 3 (Metric-Agnostic Quantification), we first compute per-regime
bias terms $B_r$ using either standard symmetric metrics (MSE, MAE, WMAPE,
raw prediction bias) or asymmetric statistical criteria (quantile loss,
pinball); these per-regime outputs are then aggregated into the Regime-wise
Relative Bias Vector (RBV).
While resulting RBV magnitudes depend on the choice of base evaluation
functional, the \textit{qualitative pattern of regime-specific fragility
remains consistent}: the identity of which pathologies degrade
disproportionately under a given training loss is preserved.
Specific instantiations are provided in the open-sourced implementation.

\subsection{Boundary Conditions for Saturation vs.~Super-Additive Degradation}
\label{app:coupled_ablation}

We conduct controlled synthetic experiments to isolate the interaction
effect between co-occurring ZIR and CV pathologies, varying a shared
intensity parameter $\alpha$. The linear-sum baseline is defined as
$\mathcal{L}_{\mathrm{linear}}(\alpha) = \mathcal{L}_{\mathrm{ZIR}}(\alpha)
+ \mathcal{L}_{\mathrm{CV}}(\alpha) - \mathcal{L}_{\mathrm{benign}}$,
subtracting the benign reference to avoid double-counting.

Under low $\alpha$, coupled loss tracks the linear baseline closely.
Within intermediate ranges, it strictly exceeds the baseline across
total MSE, burst-sample MSE, and pinball loss at $\tau{=}0.75$
(Fig.~\ref{fig:coupled_zir_cv_synth}), demonstrating super-additive
degradation consistent with the \emph{disjoint-support coupling}
hypothesized in \S~\ref{sec:loss_misspecification_mapping}. high ZIR
collapses effective support toward zero while high CV demands tail
extrapolation, creating conflicting gradient regimes. At high $\alpha$,
both isolated pathologies drive predictions toward a near-random floor;
additional distortion yields diminishing marginal degradation, causing
saturation and eventual sub-additive tendencies.

This validates the boundary-condition analysis proposed in
\S~\ref{sec:loss_misspecification_mapping}, though we do not claim
generalization to all pathology pairs or real-world settings where
counterfactual isolation is infeasible.

\begin{figure}[H]
    \centering
    \includegraphics[width=0.98\textwidth]{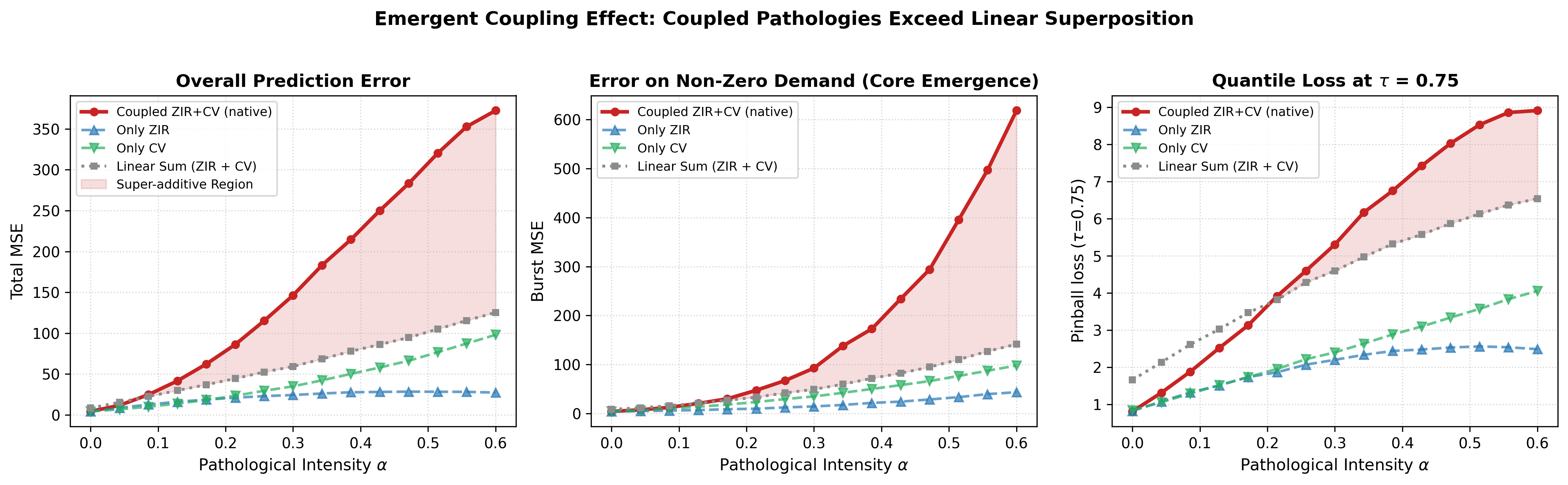}
    \caption{Coupled-pathology performance across varying $\alpha$. The linear-sum baseline aggregates excess losses from isolated-ZIR and isolated-CV runs. Pink shaded regions highlight regimes where coupled loss strictly exceeds this reference, indicating emergent super-additive degradation.}
    \label{fig:coupled_zir_cv_synth}
\end{figure}

\paragraph{Scope: diagnosis vs.\ mitigation.}
Robustification methods---distributionally robust optimization
\citep{blanchet2019quantifying,blanchet2019robust} and quantile regression
forests \citep{meinshausen06a}---re-engineer the objective to
withstand distributional perturbation. RBV is complementary by design:
a post-hoc diagnostic that localizes \emph{where} and \emph{why} a
forecaster misaligns, and can equally audit robust methods themselves;
pairing RBV-measured miscalibration with robust retraining is future work.

\section{CROSS-DOMAIN GENERALIZATION OF THE PATHOLOGY TAXONOMY}
\label{app:cross_domain}

\paragraph{Temporal structure of RetailShiftBench.}
The audit of Table~\ref{tab:benchmark_pcs} addresses pathological
coverage; we clarify the complementary question of temporal structure.
RetailShiftBench series are short ($89$ steps) and exhibit a stable
weekly periodicity without trend breaks or long-range drift. This is a
deliberate design choice: removing temporal non-stationarity as a
confound ensures that residual errors are attributable to
distributional misspecification rather than to distribution shift over
time. The aggregate daily sales over the full panel
(Fig.~\ref{fig:fig_rsb_trend}) illustrate this structure. External
validity across scale, window length, and pathology mix is established
on M5 (App.~\ref{app:m5_external}), which is adopted as the external validation
anchor above.

\begin{figure}[H]
  \centering
  \includegraphics[width=0.8\linewidth]{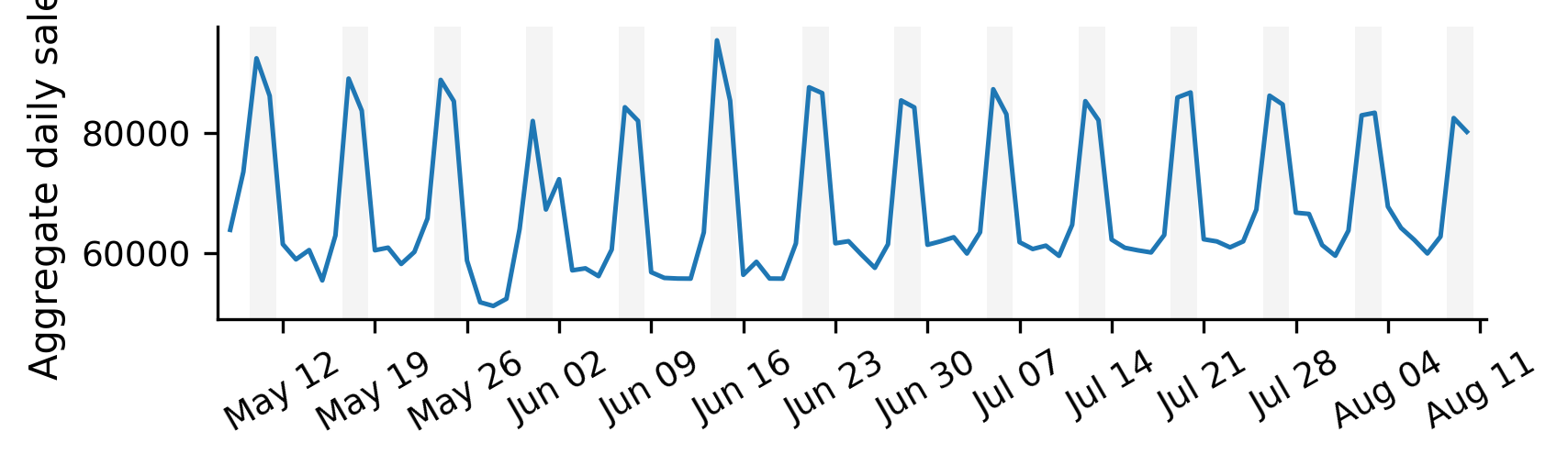}
  \caption{\textbf{RetailShiftBench temporal structure.} Aggregate daily sales across all 31{,}213 series over the full evaluation panel. Sales are dominated by a stable weekly periodicity with no trend breaks or regime shifts, indicating that the benchmark's difficulty stems from cross-sectional distribution heterogeneity rather than temporal non-stationarity.}
  \label{fig:fig_rsb_trend}
\end{figure}

Our hierarchical pathology taxonomy is designed as a \textbf{domain-agnostic}
statistical framework: the three foundational dimensions---distributional
skewness, zero-inflation, and within-series scale variability (CV)---violate
fixed statistical priors of canonical losses regardless of the business
guise under which they arise. Table~\ref{tab:cross_domain} instantiates the
taxonomy across four industrial domains, mapping each pathology and its
coupled forms onto domain-specific manifestations. The statistical
definition of each regime is unchanged across domains; only the threshold
calibration (\S~\ref{sec:taxonomy}) is domain-adaptive.

The same per-series binarization that defines RBV regime cells
(``pathological on dimension $d$ iff $\mathrm{regime}_d \neq \mathrm{low}$'')
supports a dataset-level audit: the Pathological Coverage Score
$\mathrm{PCS} = 1 - \frac{1}{3}\sum_{d} |\{i:\mathrm{regime}_d(i)=\mathrm{low}\}|/N$
reads directly as non-benign coverage, with higher values indicating a
richer pathological spectrum (\S~\ref{app:pcs}).
Table~\ref{tab:benchmark_pcs} audits mainstream forecasting benchmarks
under one standard. RetailShiftBench attains full-spectrum coverage by
construction; M4's frequency-stratified subgroups are overwhelmingly benign
and are therefore insufficient as industrial stress tests; M5 provides
adequate coverage and is adopted as the external validation anchor for the
regime-decomposed protocol (App.~\ref{app:m5_external}).

\textbf{M5 zero-run censoring.}
M5's long panel (2011--2016) contains assortment churn: items are listed,
delisted, and relisted, producing zero runs that reflect lifecycle structure
rather than demand pathology. Absent assortment flags, listing gaps are
unidentifiable from sales alone; we adopt a zero-run censoring convention:
all leading zeros before a SKU's first sale are truncated (the item is not
yet listed), and any continuous zero run longer than $k$ days ($k{=}14$;
sensitivity $k\in\{7,28\}$) is excluded from all pathology statistics
(ZIR, skewness, CV) and from model training. Under $k{=}14$,
27{,}725 of 30{,}490 SKUs (90.9\%) contain at least one censored segment,
totalling 10.9M censored time steps (18.46\% of the original panel);
PCS in Table~\ref{tab:benchmark_pcs} is computed on the censored panel.
Censoring lowers M5's PCS from 0.8905 (raw) to 0.7587, removing the
lifecycle-driven inflation; the censored-panel PCS sits in the same band
as RetailShiftBench (0.7194), supporting cross-domain comparability of the
regime decomposition rather than indicating a coverage anomaly. The threshold choice is conservative in our favor: stricter censoring
($k{=}7$) lowers PCS to 0.6692, and looser censoring ($k{=}28$) raises it to 0.7721,
monotonically with the amount of lifecycle structure retained---the
direction predicted by the assortment-churn hypothesis, with all qualitative
comparisons unchanged.

\begin{table}[H]
\centering
\caption{Cross‑domain instantiation of the pathology taxonomy.}
\label{tab:cross_domain}
\scriptsize
\setlength{\tabcolsep}{4pt}
\begin{tabular}{@{}p{2.6cm}p{2.5cm}p{2.5cm}p{2.5cm}p{2.5cm}@{}}
\toprule
\textbf{Pathology} & \textbf{Retail} & \textbf{Energy / Grid} & \textbf{Manufacturing / IoT} & \textbf{Finance} \\
\midrule
Skew & Promo bursts; seasonal peaks & Extreme weather load spikes & Defect clustering; rework spikes & Jump‑driven return asymmetry; earnings announcement effects \\
\addlinespace[3pt]
ZIR & Stockouts; intermittently stocked SKUs & Turbine downtime; zero irradiance & Sensor dropout; line stoppages & Halted trading; zero‑tick intervals in illiquid assets \\
\addlinespace[3pt]
CV & Long‑tail SKU heterogeneity & Ramping‑driven output dispersion across seasons & Batch‑to‑batch variance & Cross‑sectional dispersion of return scales \\
\addlinespace[3pt]
Coupled Distributional Pathologies & Sporadic luxury bursts; Skew‑CV: high‑variance promo series & Zero‑output + sudden load spikes; weather‑driven price spikes & Stoppage + large‑batch orders; high‑variance defect batches & Illiquidity + price jumps; earnings‑driven clustered volatility \\
\addlinespace[3pt]
Coupling with Temporal‑Structure & Pathology + holiday / lifecycle shifts & Pathology + policy / tariff changes & Pathology + line changeovers & Pathology + macro regime transitions \\
\bottomrule
\end{tabular}
\vspace{2pt}
\parbox{\linewidth}{\textbf{\footnotesize Note:} Skewness, ZIR, and high CV are first‑order distributional pathologies; coupled distributional regimes superimpose two or more first‑order pathologies in one marginal; temporal‑structure coupling additionally superimposes regime‑level shifts and is orthogonal to the distributional focus of this paper.}
\end{table}

\begin{table}[H]
\centering
\vspace{-10pt}
\caption{Pathology coverage and industrial validity of mainstream forecasting benchmarks. Higher PCS represents stronger industrial pathological spectrum coverage and better deployment-aligned robustness evaluation capability.}
\label{tab:benchmark_pcs}
\scriptsize
\resizebox{\linewidth}{!}{
\begin{tabular}{lrllccccp{2.9cm}}
\toprule
Dataset & \#Series & Domain \& Granularity & Low Skewness & Low CV & Low ZIR & \textbf{PCS} & Benchmark Grade \\
\midrule
ETT & 28 & Energy (transformer temp), hourly/15-min & 0.5714 & 0.8571 & 0.8571 & \textbf{0.2381} & Unqualified; basic temporal verification only \\
M4-Daily & 4{,}227 & Mixed (macro$\sim$micro), daily & 0.863 & 0.9970 & 1.000 & \textbf{0.046} & Insufficient pathological spectrum \\
M4-Hourly & 414 & Mixed (macro$\sim$micro), hourly & 0.865 & 0.993 & 1.0 & \textbf{0.048} & Insufficient pathological spectrum \\
M4-Monthly & 48{,}000 & Mixed (macro$\sim$micro), monthly & 0.845 & 0.999 & 1.0 & \textbf{0.052} & Insufficient pathological spectrum \\
M4-Quarterly & 24{,}000 & Mixed (macro$\sim$micro), quarterly & 0.849 & 0.999 & 1.0 & \textbf{0.051} & Insufficient pathological spectrum \\
M4-Weekly & 359 & Mixed (macro$\sim$micro), weekly & 0.652 & 0.997 & 1.0 & \textbf{0.117} & Insufficient pathological spectrum \\
M4-Yearly & 23{,}000 & Mixed (macro$\sim$micro), yearly & 0.864 & 0.999 & 1.0 & \textbf{0.045} & Insufficient pathological spectrum \\
M5 & 30{,}490 & Retail (Walmart sales), daily & 0.0513 & 0.5478 & 0.1248 & \textbf{0.7587} & Adequate coverage; adopted as external validation (App.~E) \\
RetailShiftBench & 31{,}213 & Retail (SKU demand), daily & 0.2573 & 0.3925 & 0.192 & \textbf{0.7194} & Fully qualified; full-spectrum stress test \\
\bottomrule
\end{tabular}
}
\vspace{4pt}
\parbox{\linewidth}{\scriptsize \textit{Note}: PCS thresholds are calibrated on RetailShiftBench retail data; for M4 and M5, thresholds are applied as-is with the zero-run censoring above, and domain-adaptive percentile recalibration (\S~\ref{sec:taxonomy}) is recommended for auditing other domains.}
\vspace{-10pt}
\end{table}

\paragraph{Threshold recalibration.} The \S~\ref{sec:taxonomy_foundational} thresholds are retail-calibrated. Domain-specific adoption only requires monotonic recalibration: (i) skewness---use the 90th percentile of the domain's empirical $\gamma_1$; (ii) ZIR---define tiers by the domain's $Q_1/Q_3$ of ZIR; (iii) CV---anchor ``high'' at $1.5\times$ domain median. Coupled regimes depend only on threshold monotonicity, so their identification is invariant to such recalibration.

\paragraph{Statistical-level transferability.}
PCS operates on the domain-independent statistical level, making it
directly transferable across domains: the threshold recalibration
described above suffices, with no domain-specific objective required.
Thus, a low-PCS dataset in any domain is equally unqualified for
industrial stress-testing, confirming that the auditing paradigm
generalizes beyond retail.

\section{Decomposition Response Analysis}
\label{app:decomp_response}
This appendix expands the two main tables cell by cell
(\S~\ref{app:walkthrough}), isolates the attribution of decomposition
gains (\S~\ref{app:quantile_spectra}), and traces the intervention into the
quantile-calibration lens (Fig.~\ref{fig:quantile_calibration}). All
quantitative statements follow the protocol of
\S~\ref{sec:metric_agnostic_protocol}: identical hyperparameters across
losses, Huber threshold fixed at the training-window residual quantile,
and paired train/test splits shared by the global and decomposed
evaluations. Unless stated otherwise, all experiments in this appendix
are conducted on the main RetailShiftBench benchmark
(\S~\ref{sec:experimental_setup}).

\subsection{Reading Tables~\ref{tab:regime_perf}--\ref{tab:decompose_effect}}
\label{app:walkthrough}

Three structural readings organize Table~\ref{tab:regime_perf}.
\emph{(i) Masking.} The \texttt{All} row certifies success within tight
tolerances, while pathological regimes exhibit systematic errors an order
of magnitude larger---visible cell by cell in the table. The aggregate
passes precisely because each pathological subgroup is too small to move
the pooled average: the \texttt{All} row does not summarize the regime rows but dilutes them: no
pathological subgroup is large enough to move the pooled average---the
robustness illusion.

\emph{(ii) Directional fingerprints.} Mean-targeting losses carry
\emph{positive} bias on sparse regimes, median-type losses exhibit strongly
negative bias, and quantile heads exhibit step behavior; each direction
matches the corresponding row of Table~\ref{tab:loss_misspecification},
and the \emph{magnitudes} are exactly the table entries (we avoid
restating them here to prevent transcription drift). The signs, not the
magnitudes, are the fingerprint: a loss family is identified by the
\emph{direction} in which each pathology pushes its bias, and the four
regimes form a diagnostic set from which the loss prior can be decoded.

(iii) Scale-protocol robustness. The regime-dependent bias structure
replicates across XGBoost (raw magnitudes) and TFT (per-series-normalized
internally), with one systematic exception. Median-type losses (MAE,
quantile, Huber) preserve their sign fingerprint---negative bias on
pathological regimes---under both preprocessing pipelines; mean-type
losses (MSE, Tweedie, Poisson) show preprocessing-dependent signs,
flipping from positive under raw magnitudes to negative under internal
normalization on high-CV and zero-inflated cells (Tab.~\ref{tab:regime_perf}).
What is invariant across protocols is not the sign but the structure:
RBV remains elevated under both pipelines for every loss, so the
regime-level misalignment survives normalization even where its sign is
preprocessing-dependent. Normalization changes the units of the residual
and, for mean-type losses, the direction of the dilution response, but it
does not remove the mismatch---supporting that the bias profile is an
estimand signature rather than a scale artifact. We emphasize this is
structural consistency across protocols, not protocol invariance:
magnitudes are not comparable across backbones, and quantitative
comparisons remain within-functional.

Table~\ref{tab:decompose_effect} splits responses to decomposition by
estimand family. Mean-targeting losses (MSE, Tweedie) improve on all
three instruments, because decomposition removes the mixture compromise
without touching their estimand. Median-type losses deteriorate in bias
structure while holding accuracy: with per-regime optimization the
median estimand no longer benefits from pooled smoothing, exposing
estimand-level collapse rather than fitting error. Three cells warrant
individual notes. \emph{Huber} worsens across metrics at its
data-driven threshold and remains in the worsen family across a 40-fold
$\delta$ range: its effective estimand is a rational function of the
sparsity mass ($\propto (1-p_0)\delta/p_0$), so per-regime cells with
different $p_0$ induce different effective estimands under one fixed
$\delta$---the hybrid is the most variance-fragile member of the zoo
under small cells, and no single $\delta$ is simultaneously calibrated
for all regimes. \emph{Poisson} dissociates the three instruments
entirely---structure and squared error improve while the linear metric
degrades---because exponential curvature redistributes predictive mass
in a way that linear-in-residual instruments cannot register: it is the
cleanest exhibit that linear metrics are blind to redistribution by
curvature, not to absence of effect. \emph{MAE under TFT} confirms the
protocol-level reading (iii) above: the fingerprint direction survives
the architecture change even where magnitudes shift.

Finally, decomposed mean-family configurations converge toward the Rule
heuristic's bias flatness while retaining far better accuracy,
populating the interior of the aggregation trade-off
(Fig.~\ref{fig:agg_tradeoff}) with measured anchors---each such
configuration is an operating point between pooled compromise and
per-series purity that was previously unmeasured.

$\uparrow/\downarrow$ mark directionally consistent changes across all
random seeds (3 seeds); paired-bootstrap intervals are reported as
consistency checks rather than confirmatory tests.

\subsection{Attribution Controls and Quantile Calibration Spectra}
\label{app:quantile_spectra}

Decomposition gains could in principle arise from data volume, cell
homogeneity, or label informativeness. We separate these with two
controls and inspect the intervention through two complementary
lenses---scalar bias structure and quantile calibration.
\emph{(a) Data-matched global}: a single pooled model trained on the
concatenation of all cell data with matched optimization budget.
\emph{(b) Random-label cells (R2)}: a permutation (sampled without
replacement) of series into cells matched in number and size to the
pathology decomposition, over 3 shuffle seeds---reproducing the
granularity of the decomposition while destroying its semantics.

\begin{figure}[H]
\centering
\includegraphics[width=\linewidth]{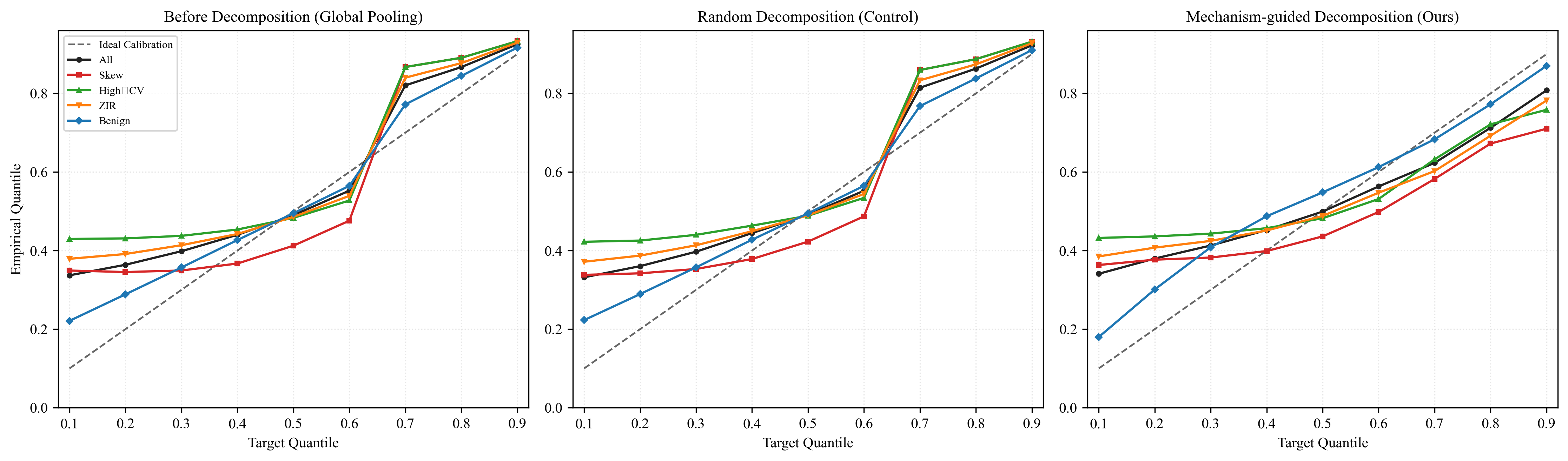}
\caption{Quantile calibration as an attribution test (XGBoost,
single-$\tau$ pinball objectives, tree depth fixed across $\tau$).
Coverage is computed on positive-demand days with the mid convention
$P(\hat y > y) + \tfrac{1}{2} P(\hat y = y)$; the dashed line marks
ideal calibration. \emph{Left}: global pooling glues pathological
regimes below the diagonal while benign stays over-covered.
\emph{Middle}: random-label decomposition with cells matched in count
and size to the pathology partition (permutation without replacement)
reproduces the glued pattern---granularity alone does not release the
curves. \emph{Right}: pathology-informed decomposition releases the
bundle toward the diagonal, with the residual gap at $\tau{=}0.9$
marking the variance wall. Near-zero low-$\tau$ coverage on
pathological regimes reflects the estimand floor of zero-inflated
marginals (\S~\ref{sec:rbv_def}), not additional miscalibration.}
\label{fig:quantile_calibration}
\end{figure}

Two readings organize the figure. In the high-$\tau$ band
($\tau \geq 0.5$), pooled training fans the regimes apart, the
random-label control (b) reproduces this
fan almost exactly, and only pathology-informed decomposition releases the
bundle toward the diagonal---label informativeness, not partition
granularity, is the operative variable. In the low-$\tau$ band,
pathological coverage sits at the estimand floor of zero-inflated
marginals (the calibrated predictor is itself zero); this reflects the
plateau degeneracy of \S~\ref{sec:rbv_def} rather than additional
miscalibration, and is annotated in the figure but not interpreted as a
gap. The \emph{false-zero} rate $P(\hat{y} = 0 \mid y > 0)$, the direct
collapse diagnostic for this floor, is overlaid per regime in the same
panel: decomposition collapses it only where pooling inflates it, i.e.,
exactly in the ZIR-dominated band.

\begin{table}[H]
\centering
\scriptsize
\caption{Loss-function trajectory grid under grouping-scheme interventions (XGB).}
\label{tab:tab_loss_trajectory_grid}
\vspace{-5pt}
\setlength{\tabcolsep}{6pt}
\setlength{\belowcaptionskip}{2pt}
\renewcommand{\arraystretch}{1.15}
\resizebox{\linewidth}{!}{
\begin{tabular}{l c c}
\hline
& \multicolumn{2}{c}{\textbf{non-linear loss family}} \\
\cline{2-3}
Metric & MSE & Tweedie \\
\hline
RBV   & $0.243 \rightarrow 0.26 \rightarrow 0.041 \downarrow$  & $0.163 \rightarrow 0.151 \rightarrow 0.014\downarrow$ \\
WMAPE & $0.4679 \rightarrow 0.4681 \rightarrow 0.462 \downarrow$  & $0.4593 \rightarrow 0.4606 \rightarrow 0.4539\downarrow$ \\
MSE & $11.909 \rightarrow 12.615 \rightarrow 11.068 \downarrow$  & $11.992 \rightarrow 12.678 \rightarrow 11.465\downarrow$ \\
\hline
& \multicolumn{2}{c}{\textbf{linear loss family}} \\
\cline{2-3}
Metric & MAE & Huber \\
\hline
RBV   & $0.675 \rightarrow 0.658 \rightarrow 0.899\uparrow$   & $0.077 \rightarrow 0.24 \rightarrow 0.262\uparrow$ \\
WMAPE & $0.4270 \rightarrow 0.4316 \rightarrow 0.4255\downarrow$ & $0.4670 \rightarrow 0.4669 \rightarrow 0.4765\uparrow$ \\
MSE & $12.057 \rightarrow 12.046  \rightarrow 11.524\downarrow$  & $13.575 \rightarrow 13.048 \rightarrow 20.176 \uparrow$\\
\hline
\end{tabular}
}
\vspace{3pt}
\parbox{\linewidth}{\scriptsize
Notes: Each cell reports ``All $\rightarrow$ Random $\rightarrow$ Regime-wise''.
All: global pooled training; Random: mean over three shuffle seeds;
Regime-wise: pathology-informed per-regime training.
All and Regime-wise values reproduced from Tab.~\ref{tab:decompose_effect} for attribution contrast:
All$\rightarrow$Random isolates granularity effects; Random$\rightarrow$Regime-wise isolates label informativeness at matched granularity;
All$\rightarrow$Regime-wise is total intervention effect.
$\downarrow$: improvement, $\uparrow$: degradation. Losses grouped by residual curvature (\S~\ref{sec:discussion}).
TFT results follow the same pattern. Rule reference values: Tab.~\ref{tab:decompose_effect}.
}
\end{table}

\subsection{Capacity Scaling Cannot Substitute for Regime Decomposition}
\label{app:capacity}

\begin{figure}[H]
\centering
\includegraphics[width=\textwidth]{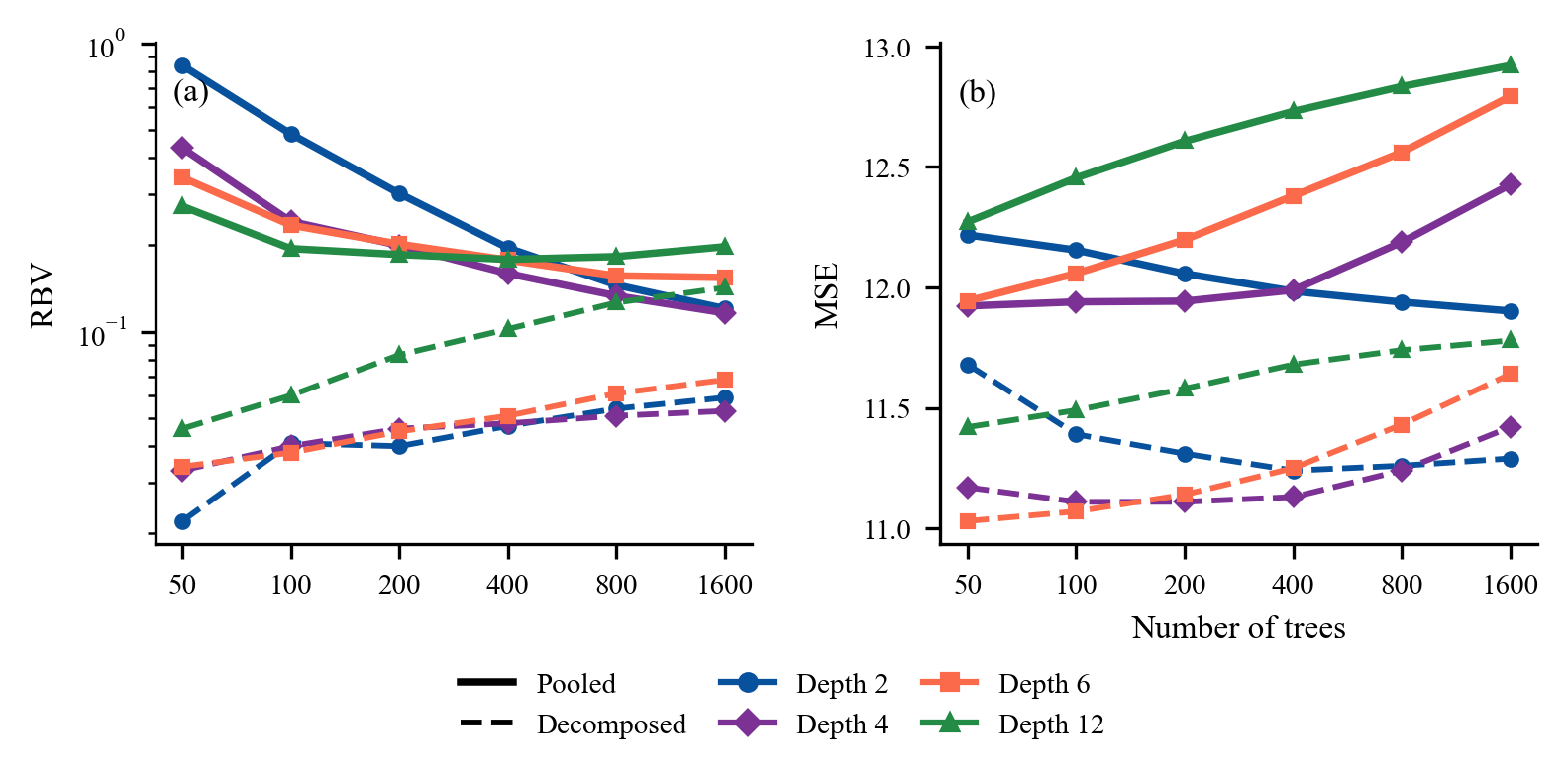}
\caption{Capacity scaling (solid) vs.\ regime decomposition (dashed) under
the MSE loss. (a) RBV (log scale): pooled depth-2 RBV improves $7\times$
($0.837 \to 0.120$) with $32\times$ the trees, and the best pooled
configuration overall (1600 trees, depth 4) saturates at a $0.116$
ceiling---decomposition (dashed) holds RBV $\leq 0.06$ across all but the
most extreme configurations (within-cell overfitting at 1600 trees,
depth 12). (b) MSE: decomposition improves aggregate accuracy as
well. WMAPE varies within $0.46$--$0.50$ across all conditions and is
omitted.}
\label{fig:capacity_mse}
\end{figure}

\begin{figure}[H]
\centering
\includegraphics[width=\textwidth]{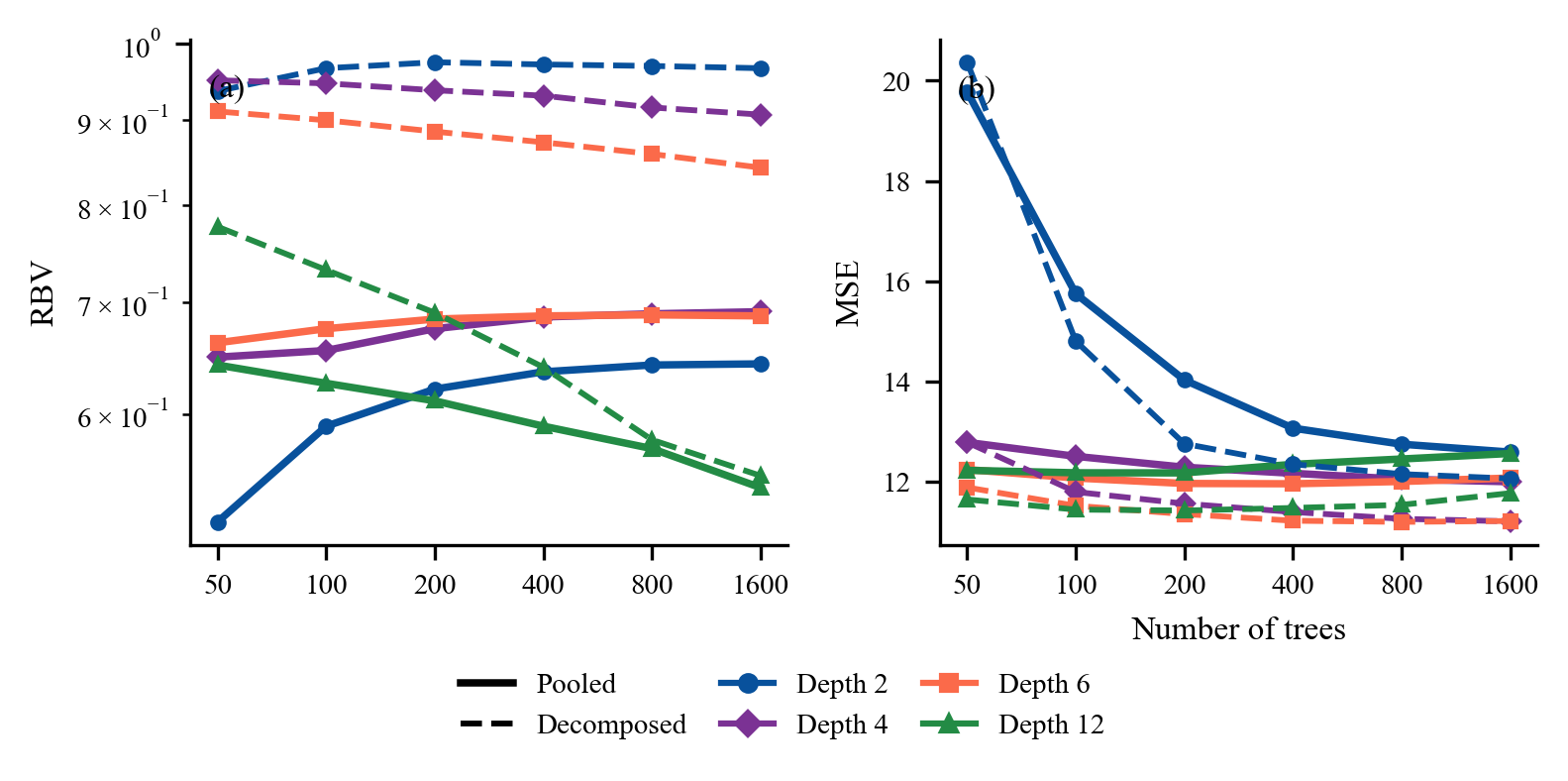}
\caption{Same setup as Fig.~\ref{fig:capacity_mse} under the MAE loss.
Decomposition \emph{increases} RBV (dashed curves lie at or above solid ones
in (a)): routing removes the benign mass that dilutes MAE's within-cell
estimand mismatch, exposing the mean--median gap rather than repairing
it---whereas aggregate error (b) still improves.}
\label{fig:capacity_mae}
\end{figure}

A natural objection to the decomposition results of Tab.~\ref{tab:decompose_effect} is
that pooling's failures might simply reflect under-capacity: a sufficiently
expressive model could, in principle, fit regime-conditional structure
implicitly, recovering what explicit routing provides. We test this objection
directly by scaling model capacity under a fixed data budget, with and
without decomposition.

\paragraph{Setup.} We train gradient-boosted trees (XGBoost) on all
RetailShiftBench series (pooled) and, separately, one model per regime cell
(decomposed, \S~\ref{sec:metric_agnostic_protocol}), sweeping trees
$n_{\mathrm{est}} \in \{50,100,200,400,800,1600\}$ and depth
$d \in \{2,4,6,8,10,12\}$ under MSE and MAE losses. All arms share the identical
train/test split; runs differ only in model stochasticity (3 seeds). No early
stopping is used, so that capacity effects are not confounded with
fit-versus-validation trade-offs.

\paragraph{Three observations.}
(i) \emph{Saturation.} Under MSE, pooled RBV improves with capacity and
saturates: depth-2 RBV improves $7\times$ ($0.837 \to 0.120$) with
$32\times$ the trees, and the best pooled configuration (1600 trees,
depth 4) stalls at a ceiling of $0.116$, with no configuration falling below
it---a residual plateau, not convergence to zero
(Fig.~\ref{fig:capacity_mse}a); at depth $\geq 8$ RBV already rebounds
while capacity keeps growing.
(ii) \emph{Decoupling.} The configuration that minimizes aggregate error
(MSE) is not the configuration that minimizes RBV: MSE favors depth 2
($11.90$), whereas RBV favors depth 4 ($0.116$), and RBV \emph{rebounds} at
depth $\geq 6$ while MSE degrades only mildly ($11.90 \to 12.92$).
Aggregate metrics therefore do not track, and cannot proxy for, regime-level
bias structure.
(iii) \emph{Breakthrough.} Decomposition collapses RBV at every matched
configuration: across the full grid, decomposed RBV stays within
$0.022$--$0.142$ and never exceeds the pooled value at the same
$(n_{\mathrm{est}}, d)$; at the smallest configuration it is
$0.022$--$0.046$, i.e., $32\times$ fewer trees than the pooled optimum, and
it simultaneously improves MSE
(Fig.~\ref{fig:capacity_mse}b). There is no accuracy--robustness trade-off.

\paragraph{Interpretation.} These patterns are the empirical signature of
the separation in Prop.~\ref{prop:decomposition}. Capacity acts on the fitting
channel $e_t$: as trees and depth grow, the pooled model fits the training
distribution ever more closely, and RBV falls toward the pooled compromise
$b_L$, where it stalls. The saturated residual is the aggregation-induced compromise that
increased capacity cannot express away, since it is not an
approximation error. Decomposition acts on the other channel, removing the pooling
compromise by construction---under a mean target $b_L\equiv0$, so for
MSE the saturated residual is entirely this compromise; the decomposed
arm's residual RBV is small, and its mild growth at extreme depth
reflects within-cell overfitting rather than any re-emergence of pooling
bias. Under MAE the asymmetry confirms the diagnosis:
decomposition \emph{increases} RBV (dashed curves lie above pooled ones at
every matched configuration, Fig.~\ref{fig:capacity_mae}a), because MAE's
mismatch is located \emph{within} cells---the mean--median gap of
Table~\ref{tab:loss_misspecification}---which pooled training dilutes and
routing exposes. Routing and loss choice address orthogonal exposures;
neither substitutes for the other.
 
\section{M5 Dataset and External Validation}
\label{app:m5_external}

\subsection{M5 Experiment Setup}
\label{app:m5_exp_setup}
We replicate the regime-diagnosis pipeline on the M5 forecasting
competition dataset~\citep{makridakis2022m5_bak}, a large-scale retail
demand panel that differs from our main benchmark in scale, granularity,
and demand sparsity. Note that our M5 protocol is designed for controlled
regime-diagnosis comparison, not for leaderboard benchmarking: we
construct 30,490 SKU$\times$store series from the final 120 days
($d_{1822}$--$d_{1941}$, 2016-01-24 to 2016-05-22) of the five-year
training partition and, following the single-origin protocol, use the
first 113 days as the training window, forecasting horizons $1$--$7$
from the single origin $t_{\mathrm{idx}}=112$ and evaluating against
realized demand on days 114--120; these choices are not comparable to
the official M5 competition setup, and our M5 numbers should not be read
against competition rankings or published M5 leaderboards. Of the 30,490
constructed series, 1,995 fail the 28-day minimum-history screen for
regime statistics (ZIR, skewness, CV), computed exclusively within the
training window; a further 1,749 of the remaining 28,495 (6.1\%) lack
usable evaluation signal at the forecast origin (zero demand in the
evaluation window or insufficient history for the deep backbones),
leaving 26,746 scored series. Pathological flags use the thresholds of
Section~\ref{sec:taxonomy_foundational} with the not-low binarization,
yielding overlapping coupled cells: most valid series carry at least one
pathological flag, the CV condition is almost always accompanied by the
other two, and only about 10\% are Benign. We train the same XGBoost
backbones under the same six losses plus a pinball-loss scan over
$\tau\in\{0.1,\dots,0.9\}$.

\paragraph{Censoring scope and sensitivity.}
The zero-run censoring convention of App.~\ref{app:cross_domain}
is applied at both levels: the full-panel PCS audit and the 120-day
evaluation window constructed here. Censoring masks individual time
steps---days inside a zero run longer than $k{=}14$ are set to missing
and excluded from pathology statistics and from training---while all
\num{30490} series are retained. The convention is far from inert at the window scale: 17.3\% of series
contain at least one censored edge run (leading: 13.5\%, trailing: 5.5\%,
the shares overlapping as one series may carry both), confirming that
assortment churn also reaches the evaluation horizon and must be censored
rather than absorbed into ZIR and skewness. All results in this appendix are computed on the
censored window.

\paragraph{Regime-decomposed replication.}

\begin{table}[H]
\centering
\scriptsize
\caption{Loss-function trajectory grid under grouping-scheme interventions on M5 (XGB, fixed-origin protocol; mean over three seeds 42, 41, 39).}
\label{tab:tab_m5_loss_trajectory}
\vspace{-5pt}
\setlength{\tabcolsep}{6pt}
\setlength{\belowcaptionskip}{2pt}
\renewcommand{\arraystretch}{1.15}
\resizebox{\linewidth}{!}{
\begin{tabular}{l c c c}
\hline
& \multicolumn{3}{c}{\textbf{non-linear loss family}} \\
\cline{2-4}
Metric & MSE & Tweedie & Poisson \\
\hline
RBV   & $0.422 \rightarrow 0.436{\scriptscriptstyle\pm0.0071} \rightarrow 0.036\downarrow$ & $0.377 \rightarrow 0.359{\scriptscriptstyle\pm0.0019} \rightarrow 0.062\downarrow$ & $0.476 \rightarrow 0.487{\scriptscriptstyle\pm0.0018} \rightarrow 0.044\downarrow$ \\
WMAPE & $0.7597 \rightarrow 0.7612{\scriptscriptstyle\pm0.0004} \rightarrow 0.7337\downarrow$ & $0.7553 \rightarrow 0.7526{\scriptscriptstyle\pm0.0005} \rightarrow 0.7259\downarrow$ & $0.7612 \rightarrow 0.7612{\scriptscriptstyle\pm0.0001} \rightarrow 0.7326\downarrow$ \\
MSE & $4.784 \rightarrow 4.745{\scriptscriptstyle\pm0.020} \rightarrow 4.607\downarrow$ & $4.758 \rightarrow 4.699{\scriptscriptstyle\pm0.020} \rightarrow 4.493\downarrow$ & $4.753 \rightarrow 4.690{\scriptscriptstyle\pm0.025} \rightarrow 4.505\downarrow$ \\
\hline
& \multicolumn{3}{c}{\textbf{linear loss family}} \\
\cline{2-4}
Metric & MAE & QL60 & Huber \\
\hline
RBV   & $0.408 \rightarrow 0.397{\scriptscriptstyle\pm0.0021} \rightarrow 0.804\uparrow$ & $0.105 \rightarrow 0.072{\scriptscriptstyle\pm0.0022} \rightarrow 0.871\uparrow$ & $0.241 \rightarrow 0.251{\scriptscriptstyle\pm0.00026} \rightarrow 0.367\uparrow$ \\
WMAPE & $0.6896 \rightarrow 0.6895{\scriptscriptstyle\pm0.0003} \rightarrow 0.6590\downarrow$ & $0.7301 \rightarrow 0.7318{\scriptscriptstyle\pm0.0003} \rightarrow 0.6958\downarrow$ & $0.7272 \rightarrow 0.7283{\scriptscriptstyle\pm0.00003} \rightarrow 0.6963\downarrow$ \\
MSE & $5.051 \rightarrow 4.943{\scriptscriptstyle\pm0.022} \rightarrow 4.705\downarrow$ & $4.969 \rightarrow 4.869{\scriptscriptstyle\pm0.010} \rightarrow 4.686\downarrow$ & $4.831 \rightarrow 4.778{\scriptscriptstyle\pm0.004} \rightarrow 4.603\downarrow$ \\
\hline
\end{tabular}
}
\vspace{3pt}
\parbox{\linewidth}{\scriptsize
Notes: Each cell reports ``All $\rightarrow$ Random $\rightarrow$ Regime-wise''.
All: global pooled training; Random: random split with cells matched
in count and size to the regime partition, mean$\pm$std over three shuffle
seeds (42, 41, 39); Regime-wise: per-cell training on the three-tier
regime partition (skew--CV--ZIR) computed from the training window only.
Protocol: 113 training days per series, single forecast origin at the last training day,
horizons $h{=}1,\dots,7$ pooled (26{,}746 scorable series; 79 series in sub-guard cells excluded).
All$\rightarrow$Random isolates granularity effects; Random$\rightarrow$Regime-wise isolates
label informativeness at matched granularity.
$\downarrow$: improvement, $\uparrow$: degradation.
MSE is reported in raw demand units; WMAPE and RBV are scale-free and serve as the
cross-regime instruments.
Unlike RetailShiftBench (Tab.~\ref{tab:tab_loss_trajectory_grid}), decomposition improves
WMAPE and MSE for \emph{all} losses while degrading RBV for the linear family---the
accuracy--robustness decoupling on real data. Grouping-induced RBV increases for
quantile-type losses track the model-independent intrinsic floor of
cell-level $(Q_\tau-\mu)/\mu$ dispersion (see \S\ref{app:rbv_floor}).
}
\end{table}

\begin{figure}[t]
\centering
\includegraphics[width=\textwidth]{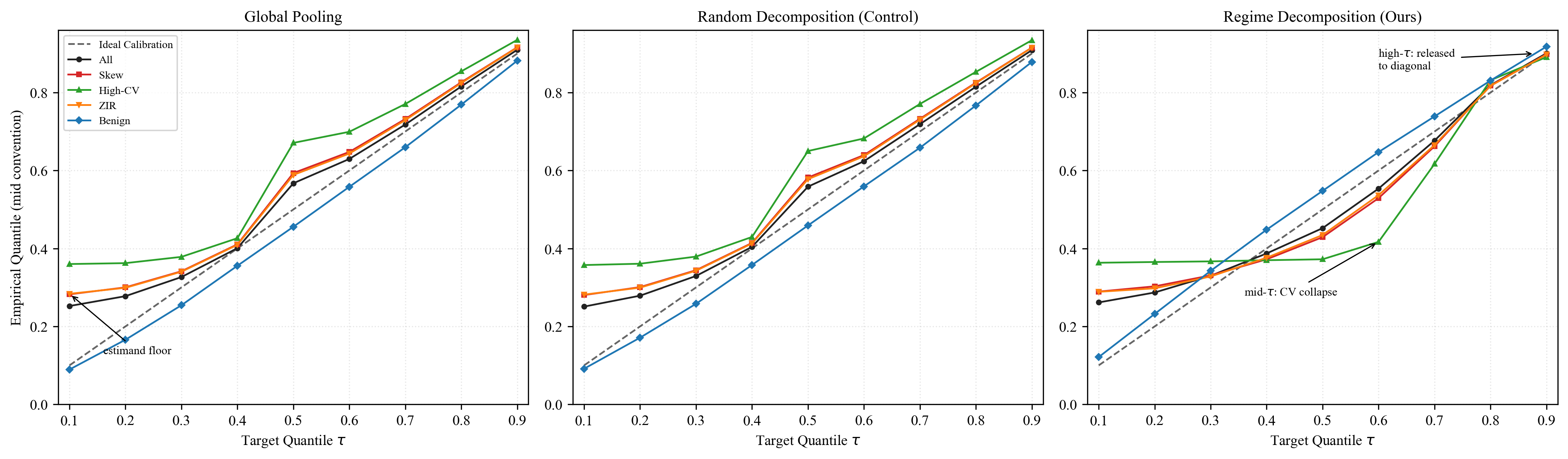}
\caption{Quantile-calibration attribution on M5
(mean over three seeds 42, 41, 39;$\tau\in\{0.1,\dots,0.9\}$; $\tau{=}0.5$ shown via MAE, since
$\mathrm{pinball}(0.5)\equiv\mathrm{MAE}$). Exceedance uses the mid
convention $\Pr[\hat y>y]+\tfrac12\Pr[\hat y=y]$.
\textbf{Left:} global pooling --- the aggregate curve hugs the diagonal
while regime curves fan out.
\textbf{Center:} random decomposition (size-matched control) --- 
reproduces the pooled panel, isolating label informativeness.
\textbf{Right:} regime decomposition --- high-$\tau$ curves are
released to the diagonal while the High-CV collapse at mid $\tau$ is
exposed. Low-$\tau$ elevations reflect the zero-inflation estimand
floor ($\mathrm{zir}/2$), not miscalibration.}
\label{fig:fig_m5_quantile_calibration}
\end{figure}

Table~\ref{tab:tab_m5_loss_trajectory} reports the
All$\,\to\,$Random$\,\to\,$Regime-wise trajectory on M5. The two
attribution steps of Section~\ref{app:quantile_spectra} replicate cleanly:
the size-matched random decomposition is inert for all thirteen losses
($|\Delta\mathrm{RBV}|\le 0.04$), ruling out the granularity confound,
whereas mechanism-guided decomposition reshapes the loss landscape in a
loss-family-dependent manner --- non-linear losses (MSE, Tweedie,
Poisson) see RBV collapse to near zero, while linear losses (MAE,
QL60, Huber) show higher RBV that tracks the model-independent
intrinsic floor of cell-level $(Q_\tau-\mu)/\mu$ dispersion (a perfect
$\tau$-quantile forecaster would itself exhibit RBV of $0.76$ and
$0.77$ under regime grouping for MAE and QL60, respectively), with
per-cell exceed rates simultaneously moving toward $\tau$
(Fig.~\ref{fig:fig_m5_quantile_calibration}); i.e., the increase
exposes objective-induced gaps rather than worsening fit --- and
pinball losses flip sign at $\tau\!\approx\!0.6$--$0.7$. This mirrors
the main benchmark and provides a quantitative cross-dataset match,
e.g., the High-CV mean-relative bias of MAE moves $-0.60 \to -0.95$
on M5 versus $-0.58 \to -0.98$ on the main benchmark.

Figure~\ref{fig:fig_m5_quantile_calibration} replicates the
quantile-calibration attribution of Figure~\ref{fig:quantile_calibration} on M5. Left:
global pooling glues the aggregate curve to the diagonal while regime
curves fan out (Benign below, High-CV above); the random control
reproduces the pooled panel; and regime decomposition releases high-$\tau$
curves to the diagonal while exposing a mid-$\tau$ High-CV collapse
(e.g., $\mathrm{Exceed}=0.417$ at $\tau=0.6$). We note that on
zero-inflated demand the low-$\tau$ elevation of the aggregate curve is
largely an \emph{estimand floor} rather than miscalibration: a perfectly
calibrated predictor equals zero whenever $\tau$ falls below the
zero-mass, so its exceedance is $\mathrm{zir}/2$ under the mid
convention. The random-decomposition column of
Table~\ref{tab:tab_m5_loss_trajectory} averages over three shuffle
seeds (42, 41, and 39); mean$\pm$std over seeds is reported in the
table notes.

\paragraph{Quantile objectives and the intrinsic bias floor.}
For quantile (and median) losses, $B_r$ compares a $\tau$-quantile
forecast against the conditional mean, which is nonzero by construction;
the gap $(Q_\tau-\mu)/\mu$ is large in zero-inflated and high-variance
cells. Empirically this imposes a model-independent floor on RBV:
computing $B_r$ from the empirical training distribution of each cell
(a perfect $\tau$-quantile forecaster) yields RBV$^*$ that tracks the
regime-grouped observed RBV closely (MAE: $0.80$ vs.\ $0.76$; ql60:
$0.87$ vs.\ $0.77$; ql90: $1.08$ vs.\ $1.22$), with the U-shaped
dependence on $\tau$ reproduced. Pooled training can instead push RBV below the floor (ql60: 0.11 $\ll$
0.77), not by fitting better but by imposing a single compromise quantile
under which per-cell exceedances cluster (0.56--0.70 at nominal 0.6),
yielding low dispersion of the mean-relative bias across cells even
though every cell remains miscalibrated; under regime grouping the
exceedances return to their nominal level (ql90: 0.92/0.90/0.90/0.89
against 0.9). For quantile losses RBV must therefore be read
jointly with calibration (Fig.~\ref{fig:fig_m5_quantile_calibration});
the grouping-induced RBV increase exposes objective-induced gaps rather
than worsening misspecification.
\label{app:rbv_floor}

\begin{table}[H]
\vspace{-5pt}
\caption{\textbf{Regime decomposition on the M5 deep backbone (NHiTS).}
Each cell reports All $\to$ Random $\to$ Regime-wise training.
All: global pooled training; Random: random split with cells matched in count
and size to the pathology partition (mean over three shuffle seeds);
Regime-wise: per-cell training on the skew--CV--ZIR partition. Single seed
(42) for All and Regime-wise.}
\label{tab:deep_grouping_summary}
\vspace{-5pt}
\centering
\small
\begin{tabular}{l ccc ccc}
\toprule
& \multicolumn{3}{c}{RBV} & \multicolumn{3}{c}{WMAPE} \\
\cmidrule(lr){2-4}\cmidrule(lr){5-7}
Loss & All & Random & Regime & All & Random & Regime \\
\midrule
MSE     & 0.473 & 0.472 & 0.400 $\downarrow$ & 0.7526 & 0.7381 & 0.6764 $\downarrow$ \\
MAE     & 0.428 & 0.454 & 0.416 $\downarrow$ & 0.6660 & 0.6918 & 0.6625 $\downarrow$ \\
QL60    & 0.420 & 0.437 & 0.480 $\uparrow$   & 0.6791 & 0.7096 & 0.6910 $\uparrow$ \\
Huber   & 0.397 & 0.390 & 0.369 $\downarrow$ & 0.6911 & 0.6887 & 0.6659 $\downarrow$ \\
Tweedie & 0.500 & 0.469 & 0.421 $\downarrow$ & 0.7818 & 0.7236 & 0.6810 $\downarrow$ \\
Poisson & 0.470 & 0.453 & 0.408 $\downarrow$ & 0.7057 & 0.7017 & 0.6707 $\downarrow$ \\
\bottomrule
\end{tabular}
\vspace{-3pt}
\begin{minipage}{0.95\linewidth}
\vspace{3pt}
\footnotesize
Notes: NHiTS backbones on the M5 single-origin protocol
($n = 173{,}152 = 24{,}736$ encoder-eligible series $\times$ $7$ horizons;
series lacking encoder history at the origin are excluded, cf.\
Tab.~\ref{tab:tab_m5_loss_trajectory}). Chronos-Bolt, being zero-shot
with sliding-window context and no fixed encoder length, instead scores
the full $26{,}746$-series scored panel (short series are left-padded
rather than excluded), so its eligible set strictly contains the NHiTS
encoder-eligible subset and cross-model comparisons read at the ranking
level. $\downarrow$: improvement, $\uparrow$: degradation. Regime-wise
splitting improves WMAPE sharply (optimization-type gain) but moves RBV
only modestly, unlike the converged XGBoost backbone
(Tab.~\ref{tab:tab_m5_loss_trajectory}): the residual bias is a CV-cell
zero-collapse (predicted within-cell CV $0.06$--$0.11$ vs.\ realized
$0.47$) that per-cell training cannot repair. The QL60 RBV increase
tracks the model-independent estimand floor (\S\ref{app:rbv_floor}),
mirroring the XGBoost quantile family.
\end{minipage}
\vspace{-5pt}
\end{table}

\paragraph{Deep-backbone replication.}
To test whether the characterized anatomy is specific to converged
gradient-boosted training, we replicate the grouping intervention on an
NHiTS backbone, which operates in an under-trained regime on this panel.
Tab.~\ref{tab:deep_grouping_summary} reports the same
All$\to$Random$\to$Regime-wise trajectory. The diagnostic pattern
inverts: Regime-wise training improves WMAPE sharply
(optimization-type gain) yet moves RBV only modestly, and the residual
bias concentrates in the CV cell, where predicted within-cell dispersion
collapses ($0.06$--$0.11$ predicted vs.\ $0.47$ realized)---a failure
mode that per-cell training cannot repair. The deep backbone thus
exhibits optimization-type failure where XGB exhibits bias-type failure;
the diagnosis separates the two, and only the bias-type component is
repairable by re-targeting.

\subsection{OR Baselines and Oracle Reference Predictors}
\label{app:or_m5}

\paragraph{OR baselines and oracle reference predictors.}
Table~\ref{app:tab_or_oracle_m5} reports Croston, SBA, TSB, and
TSB-HB alongside two oracle reference predictors on M5. Three
observations. (i) All four OR methods track the oracle mean (excess
bias within $\pm0.08$ in every cell, $\pm0.04$ for SBA and TSB-HB):
their low RBV (0.019--0.028) reflects genuine per-series mean-rate
calibration, not insensitivity---an order of magnitude below every
trained loss in Tab.~\ref{tab:tab_m5_loss_trajectory}. (ii) The
oracle median---perfectly calibrated for its own functional---under-forecasts
by 46--94\% on pathological regimes, demonstrating that estimand choice,
not model quality, dictates the mismatch profile. (iii) The oracle
median attains the best WMAPE (0.668) with the worst bias allocation
(RBV $0.695$), a direct exhibit of our thesis that aggregate accuracy
certifies neither the presence nor direction of mismatch. The cost of
OR calibration is efficiency: their WMAPE (0.725--0.739) rails pooled
XGB-MAE. This contrast is a calibration reference, not an accuracy ranking.

\begin{table}[t]
\centering
\caption{Intermittent-demand baselines and oracle references on the M5
single-origin protocol. Bias $B_r$ follows the paper's sample-pooled
sum-ratio convention, and RBV is the unweighted $\ell_2$ norm of the
centered bias vector (identical to Tab.~\ref{tab:tab_loss_trajectory_grid}).
Excess bias is relative to oracle-mean. Croston's uniformly positive
bias is its known over-prediction; SBA's damping factor largely removes
it. Oracle-median achieves the best WMAPE yet the largest regime bias
variation: accuracy under MAE does not imply calibration for the
pooled-mean estimand.}
\label{app:tab_or_oracle_m5}
\small
\begin{tabular}{lrrrrrrr}
\toprule
Method & $B_{\mathrm{benign}}$ & $B_{\mathrm{zir}}$ & $B_{\mathrm{skew}}$
       & $B_{\mathrm{cv}}$ & $B$ & RBV & WMAPE \\
\midrule
Croston        & $ 0.098$ & $ 0.077$ & $ 0.086$ & $ 0.100$ & $ 0.087$ & 0.020 & 0.739 \\
SBA            & $ 0.043$ & $ 0.023$ & $ 0.031$ & $ 0.045$ & $ 0.033$ & 0.019 & 0.725 \\
TSB            & $ 0.095$ & $ 0.062$ & $ 0.076$ & $ 0.061$ & $ 0.075$ & 0.028 & 0.734 \\
TSB-HB         & $ 0.057$ & $ 0.035$ & $ 0.036$ & $ 0.059$ & $ 0.043$ & 0.024 & 0.739 \\
\midrule
Oracle mean    & $ 0.068$ & $ 0.026$ & $ 0.035$ & $ 0.023$ & $ 0.040$ & 0.036 & 0.738 \\
Oracle median  & $ 0.001$ & $-0.456$ & $-0.390$ & $-0.941$ & $-0.353$ & 0.695 & 0.668 \\
\bottomrule
\end{tabular}
\end{table}

\subsection{Zero-Shot Foundation-Model Validation}
\label{app:chronos}

Building on the same protocol, we test whether the characterized
anatomy survives the removal of gradient training entirely. We evaluate
a frozen Chronos-Bolt checkpoint~\cite{ansari2024chronos} in zero-shot
mode---no parameter update on M5---under the identical evaluation
convention (regime map, sum-ratio bias, unweighted RBV). The released
mean head degenerates to the $0.5$-quantile in our build, so the mean
estimand is recovered by trapezoid quadrature over 50 quantile
heads, $\hat\mu = \int_0^1 \hat{Q}(u)\,du$, $u = 0.01, 0.03, \dots, 0.99$; the median
estimand is the $q_{0.5}$ head. The two estimands therefore share one
checkpoint and one forward pass, isolating the estimand from capacity
and training effects.

\begin{table}[t]
\centering
\caption{Chronos-Bolt zero-shot on the M5 single-origin protocol
(App.~\ref{app:m5_exp_setup}), horizons 1--7 pooled ($n{=}187{,}222$ samples).
Mean vs.\ median estimands decoded from the \emph{same} frozen forward
pass. Bias $=\sum(\hat y-y)/\sum|y|$ per cell; RBV and WMAPE as in
Tab.~\ref{tab:regime_perf}.}
\label{app:tab:chronos_m5}
\small
\begin{tabular}{lrrrrrrr}
\toprule
estimand & $B_{\rm benign}$ & $B_{\rm zir}$ & $B_{\rm skew}$ & $B_{\rm cv}$
& $B_{\rm global}$ & RBV & WMAPE \\
\midrule
mean (quadrature) & $0.114$ & $-0.119$ & $-0.060$ & $-0.328$ & $0.008$
& $\mathbf{0.381}$ & $0.703$ \\
median ($q_{0.5}$) & $0.070$ & $-0.493$ & $-0.368$ & $-0.853$ & $-0.194$
& $0.789$ & $\mathbf{0.672}$ \\
\bottomrule
\end{tabular}
\end{table}

\begin{figure*}[t]
\centering
\includegraphics[width=\linewidth]{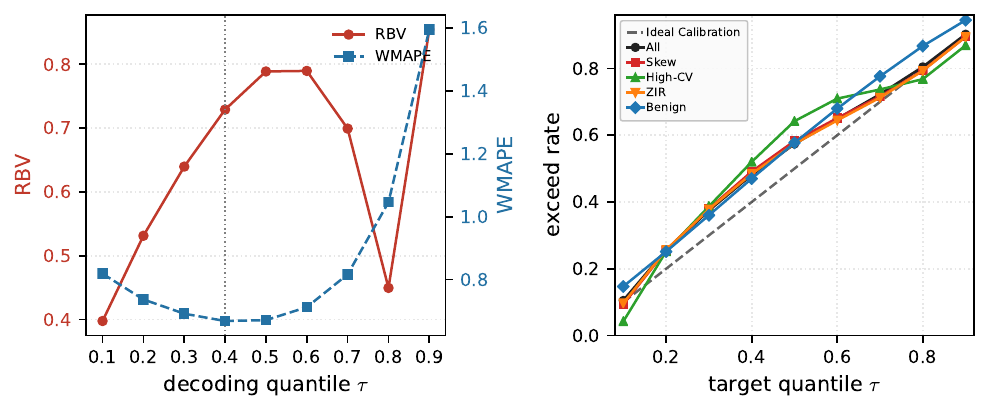}
\caption{\textbf{Chronos-Bolt zero-shot evaluation on M5.} \emph{Left:} RBV
and aggregate WMAPE as the decoding quantile $\tau$ is swept on a frozen
checkpoint; the dotted vertical line marks $\tau^{\ast}{=}0.4$, the quantile
minimizing aggregate WMAPE. RBV($\tau$) is U-shaped and attains its minimum
at the quadrature-mean estimand (Tab.~\ref{app:tab:chronos_m5}).
\emph{Right:} per-regime exceed rate,
$\Pr[\hat{y}>y]+0.5\Pr[\hat{y}=y]$, of zero-shot quantile decoding against
the diagonal: quantile heads over-cover throughout the mid range (exceed
$0.574$ at nominal $0.5$), the high-CV segment shows the largest
over-coverage, and calibration recovers at high $\tau$. Unlike
Fig.~\ref{fig:quantile_calibration}, the low-$\tau$ end shows no
zero-inflation floor---the continuous quantile heads emit no atom at zero,
so the ZIR curve tracks the diagonal while the floor in Fig.~\ref{fig:fig_m5_quantile_calibration} is a
construct of discrete zero-mass predictors. Negative predictions ($65\%$ at $q_{0.1}$) mechanically depress the
exceed rate at low $\tau$ (App.~\ref{app:chronos}).}
\label{app:fig:chronos_tau}
\end{figure*}

Four observations follow. (i)~\emph{Estimand, not training, sets the
bias structure:} the median under-predicts globally ($-0.19$) with
monotone worsening across the benign$\to$zir$\to$CV hierarchy (down to
$-0.85$), whereas the quadrature mean is globally unbiased ($+0.01$) yet
conceals a $+0.11/-0.33$ benign/CV split---unbiasedness by cancellation,
the signature of Fig.~\ref{fig:paradigm_evolution}. (ii)~\emph{The
accuracy--miscalibration trade-off persists:} the median attains the
better WMAPE but twice the regime-bias dispersion (RBV $0.79$ vs.\
$0.38$), mirroring the quantile-family trade-off of
Tab.~\ref{tab:regime_perf}. (iii)~\emph{The $\tau$-scan anatomy
reproduces:} sweeping $\tau \in \{0.1,\dots,0.9\}$ with each $q_\tau$ as a
point forecast yields a U-shaped RBV($\tau$) with WMAPE-optimal
$\tau^{\ast}{=}0.4<0.5$ (Fig.~\ref{app:fig:chronos_tau}), and the
quadrature mean sits at the RBV minimum; the median head is miscalibrated
in coverage (exceed rate $0.574$ at nominal $0.5$), mirroring
Fig.~\ref{fig:quantile_calibration}. (iv)~\emph{The quantile heads are unconstrained and emit inadmissible
negative demand:} $65.2\%$ of $q_{0.1}$ and $5.0\%$ of $q_{0.5}$
(sku,~horizon) predictions are negative, whereas the quadrature mean is
effectively non-negative ($0.008\%$). This mechanically depresses the
exceed rate at low $\tau$---a negative forecast can never exceed zero
demand---so the apparent low-$\tau$ calibration of the quantile curves
reflects this artifact rather than quantile accuracy, and part of the
median's under-prediction in pathological cells stems from inadmissible
values rather than from the estimand alone. The quadrature mean, which is
non-negative by construction, isolates the estimand effect cleanly.





\end{document}